\relax
\documentclass[letterpaper]{article} 
\usepackage{aaai22}  
\usepackage{times}  
\usepackage{helvet}  
\usepackage{courier}  
\usepackage[hyphens]{url}  
\usepackage{graphicx} 
\urlstyle{rm} 
\usepackage{natbib}  
\usepackage{caption} 
\DeclareCaptionStyle{ruled}{labelfont=normalfont,labelsep=colon,strut=off} 
\frenchspacing  
\setlength{\pdfpagewidth}{8.5in}  
\setlength{\pdfpageheight}{11in}  
%
\usepackage{algorithm}

%
\usepackage{newfloat}
\usepackage{listings}
\lstset{%
	basicstyle={\footnotesize\ttfamily},
	numbers=left,numberstyle=\footnotesize,xleftmargin=2em,
	aboveskip=0pt,belowskip=0pt,%
	showstringspaces=false,tabsize=2,breaklines=true}
\floatstyle{ruled}
\newfloat{listing}{tb}{lst}{}
\floatname{listing}{Listing}

\pdfinfo{
/Title (MIP-GNN: A Data-Driven Framework for Guiding Combinatorial Solvers)
/Author (Elias B. Khalil, Christopher Morris, Andrea Lodi)
/TemplateVersion (2022.1)
} 

\usepackage{graphicx}

\usepackage[]{subcaption}
\usepackage{booktabs}
\usepackage{url}
\usepackage{todonotes}
\usepackage{multirow}
\usepackage{tikz}
\usetikzlibrary{calc}
\usetikzlibrary{decorations.pathmorphing}

\newcommand{\xhdr}[1]{{\noindent\bfseries #1}}
\pgfdeclarelayer{background}
\pgfsetlayers{background,main}

\usetikzlibrary{shapes}
\usetikzlibrary{arrows.meta,backgrounds,automata}
\usetikzlibrary{positioning,shapes,matrix,calc}
\tikzstyle{vertex}=[circle, draw, fill=gray!80!white,thick,scale=1.2]
\tikzstyle{edge}=[draw=black, thick,-]

\usepackage[most]{tcolorbox}

\tcbset{mystyle/.style={
    enhanced, 
    coltitle = black, 
    colframe = black!15,
    attach boxed title to top left={yshift=-.25mm-\tcboxedtitleheight/2, xshift=2ex},
    boxed title style={enhanced, 
        frame code={\draw[black!15, line width=.5mm, fill=white, 
            rounded corners=1.75ex] 
            (frame.south west) rectangle (frame.north east);}, 
        interior empty}
    }}

\newtcbox{\mybox}[2][]{mystyle, #1, title=#2}
\newtcbox{\mytikzbox}[2][]{mystyle, tikz upper, #1, title=#2}

\usepackage{amsmath}
\usepackage{amssymb}
\usepackage{amsthm}
\usepackage{amsfonts}
\usepackage{thmtools}		
\usepackage{mleftright}
\usepackage{stmaryrd}
\usepackage{nicefrac}
\makeatletter
\def\@captype{table}
\makeatother

\theoremstyle{definition}
\newtheorem{theorem}{Theorem}
\newtheorem{assumption}{Assumption}

\newtheorem{proposition}[theorem]{Proposition}

\usepackage{thm-restate}
\usepackage[mathic=true]{mathtools}
\usepackage{fixmath}
\usepackage{siunitx}
\usepackage{color}

\usepackage{pifont}

\usepackage{enumitem}
\setlist[enumerate]{noitemsep, topsep=0.5\topsep}
\setlist[description]{noitemsep, topsep=0.5\topsep}
\setlist[itemize]{noitemsep, topsep=0.5\topsep}
 
\usepackage{setspace}
\usepackage[activate={true,nocompatibility},final,kerning=true,stretch=0, shrink=60]{microtype}
\usepackage{ellipsis}
\usepackage{xspace}

\usepackage[utf8]{inputenc} 
\usepackage{url}            
\usepackage{booktabs}       
\usepackage{amsfonts}       
\usepackage{nicefrac}       
\usepackage{xcolor}         
\usepackage{comment}

\usepackage{url}            
\usepackage{booktabs}       
\usepackage{amsfonts}       
\usepackage{nicefrac}       
\usepackage{xcolor}         

\makeatletter
\def\thmt@refnamewithcomma #1#2#3,#4,#5\@nil{%
	\@xa\def\csname\thmt@envname #1utorefname\endcsname{#3}%
	\ifcsname #2refname\endcsname
	\csname #2refname\expandafter\endcsname\expandafter{\thmt@envname}{#3}{#4}%
	\fi
}
\makeatother
\usepackage[capitalise,noabbrev]{cleveref}   
\usepackage{algpseudocode}
\algdef{SE}{ParForAllLoop}{EndParForAllLoop}[1]{\textbf{parallel for}\(\mbox{#1}\) \textbf{do}}{\textbf{end}}

\newcommand{\new}[1]{\emph{#1}}

\DeclarePairedDelimiter{\norm}{\lVert}{\rVert}

\newcommand{\cC}{\ensuremath{{\mathcal C}}\xspace}
\newcommand{\cD}{\ensuremath{{\mathcal D}}\xspace}

\newcommand{\cO}{\ensuremath{{\mathcal O}}\xspace}

\newcommand{\cV}{\ensuremath{{\mathcal V}}\xspace}

\newcommand{\bbR}{\ensuremath{\mathbb{R}}}
\newcommand{\bbQ}{\ensuremath{\mathbb{Q}}}

\newcommand{\bbZ}{\ensuremath{\mathbb{Z}}}

\newcommand{\RR}{\mathbb{R}}

\newcommand{\NN}{\mathbb{N}}

\newcommand{\oms}{\{\!\!\{}
\newcommand{\cms}{\}\!\!\}}

\newcommand{\trans}{^\mathsf{T}}

\newcommand{\round}[1]{\ensuremath{\lfloor#1\rceil}}

\setcounter{secnumdepth}{0} 

\title{MIP-GNN: A Data-Driven Framework for Guiding Combinatorial Solvers}
\author {
    Elias B.\ Khalil,\equalcontrib\textsuperscript{\rm 1, 2}
    Christopher Morris,\equalcontrib\textsuperscript{\rm 3}
    Andrea Lodi\textsuperscript{\rm 4}
}
\affiliations {
    \textsuperscript{\rm 1}Department of Mechanical \& Industrial Engineering, University of Toronto\\
    \textsuperscript{\rm 2}Scale AI Research Chair in Data-Driven Algorithms for Modern Supply Chains\\
    \textsuperscript{\rm 3}Mila -- Quebec AI Institute, McGill University and RWTH Aachen University\\
    \textsuperscript{\rm 4}CERC, Polytechnique Montr\'eal and Jacobs Technion-Cornell Institute, Cornell Tech and Technion -- IIT\\
    khalil@mie.utoronto.ca, chris@christophermorris.info, andrea.lodi@cornell.edu
}

\begin{document}
\maketitle

\begin{abstract}
Mixed-integer programming (MIP) technology offers a generic way of formulating and solving combinatorial optimization problems. While generally reliable, state-of-the-art MIP solvers base many crucial decisions on hand-crafted heuristics, largely ignoring common patterns within a given instance distribution of the problem of interest. Here, we propose MIP-GNN, a general framework for enhancing such solvers with data-driven insights. By encoding the variable-constraint interactions of a given mixed-integer linear program (MILP) as a bipartite graph, we leverage state-of-the-art graph neural network architectures to predict variable biases, i.e., component-wise averages of (near) optimal solutions, indicating how likely a variable will be set to 0 or 1 in (near) optimal solutions of binary MILPs. In turn, the predicted biases stemming from a single, once-trained model are used to guide the solver, replacing heuristic components. We integrate MIP-GNN into a state-of-the-art MIP solver, applying it to tasks such as node selection and warm-starting, showing significant improvements compared to the default setting of the solver on two classes of challenging binary MILPs. Our code and appendix are publicly available at \url{https://github.com/lyeskhalil/mipGNN}. 
\end{abstract}

\section{Introduction}
\noindent Nowadays, combinatorial optimization (CO) is an interdisciplinary field spanning optimization, operations research, discrete mathematics, and computer science, with many critical real-world applications such as vehicle routing or scheduling; see, e.g.,~\cite{Kor+2012} for a general overview. Mixed-integer programming technology offers a generic way of formulating and solving CO problems by relying on combinatorial solvers based on tree search algorithms, such as branch and cut, see, e.g.,~\cite{Nem+1988,Schr+1999,Ber+2005}. Given enough time, these algorithms find certifiably optimal solutions to \textsf{NP}-hard problems. However, many essential decisions in the search process, e.g., node and variable selection, are based on heuristics~\cite{lodi2013heuristic}. The design of these heuristics relies on intuition and empirical evidence, largely ignoring that, in practice, one often repeatedly solves problem instances that share patterns and characteristics. Machine learning approaches have emerged to address this shortcoming, enhancing state-of-the-art solvers with data-driven insights~\cite{Bengio2018,Cap+2021,kotary2021end}. 

Many CO problems can be naturally described using graphs, either as direct input (e.g., routing on road networks) or by encoding variable-constraint interactions (e.g., of a MILP model) as a bipartite graph. As such, machine learning approaches such as \new{graph neural networks}  (GNNs)~\cite{Gil+2017,Sca+2009} have recently helped bridge the gap between machine learning, relational inputs, and combinatorial optimization~\cite{Cap+2021}. GNNs compute vectorial representations of each node in the input graph in a permutation-equivariant fashion by iteratively aggregating features of neighboring nodes. By parameterizing this aggregation step, a GNN is trained end-to-end against a loss function to adapt to the given data distribution. Hence, GNNs can be viewed as a relational inductive bias~\cite{Bat+2018}, encoding crucial graph structures underlying the CO/MIP instance distribution of interest.

\subsection{Present Work} 
We introduce \new{MIP-GNN}, a general GNN-based framework for guiding state-of-the-art branch-and-cut solvers on (binary) mixed-integer linear programs. Specifically, we encode the variable-constraint interactions of a MILP as a bipartite graph where a pair of variable/constraint nodes share an edge iff the variable has a non-zero coefficient in the constraint; see \cref{overview}. To guide a solver in finding a solution or certifying optimality faster  for a given instance, we perform supervised GNN training to predict \emph{variable biases}~\cite{hsu2008probabilistically}, which are computed by component-wise averaging over a set of near-optimal solutions of a given MILP. Intuitively, these biases encode how likely it is for a variable to take a value of 1 in near-optimal solutions. To tailor the GNN more closely to the task of variable bias prediction, we propagate a ``residual error'', indicating how much the current assignment violates the constraints. Further, the theory, outlined in the appendix, gives some initial insights into the theoretical capabilities of such GNNs architecture in the context of MILPs.

We integrate such trained GNNs into a state-of-the-art MIP solver, namely CPLEX~\cite{CPLEX1210}, by using the GNN's variable bias prediction in crucial tasks within the branch-and-cut algorithm, such as \emph{node selection} and \emph{warm-starting}. On a large set of diverse, real-world binary MILPs, modeling the \emph{generalized independent set problem}~\cite{Col+2017} and a \emph{fixed-charge multi-commodity network flow problem}~\cite{hewitt2010combining}, we show  significant improvements over default CPLEX for the task of node selection, while also reporting promising performance for warm-starting and branching variable selection. This is achieved without any feature engineering, i.e., by  relying purely on the graph information induced by the given MILP.\footnote{We focus on binary problems, but the extension to general MILPs is discussed in the appendix.} 

Crucially, for the first time in this line of research, we use a single, once-trained model for bias prediction to speed up \emph{multiple} components of the MIP branch and cut simultaneously. In other words, we show that learning the bias associated with sets of near-optimal solutions is empirically beneficial to multiple crucial MIP ingredients. 

\begin{figure}[t]
    \centering
     \includegraphics[width=\columnwidth]{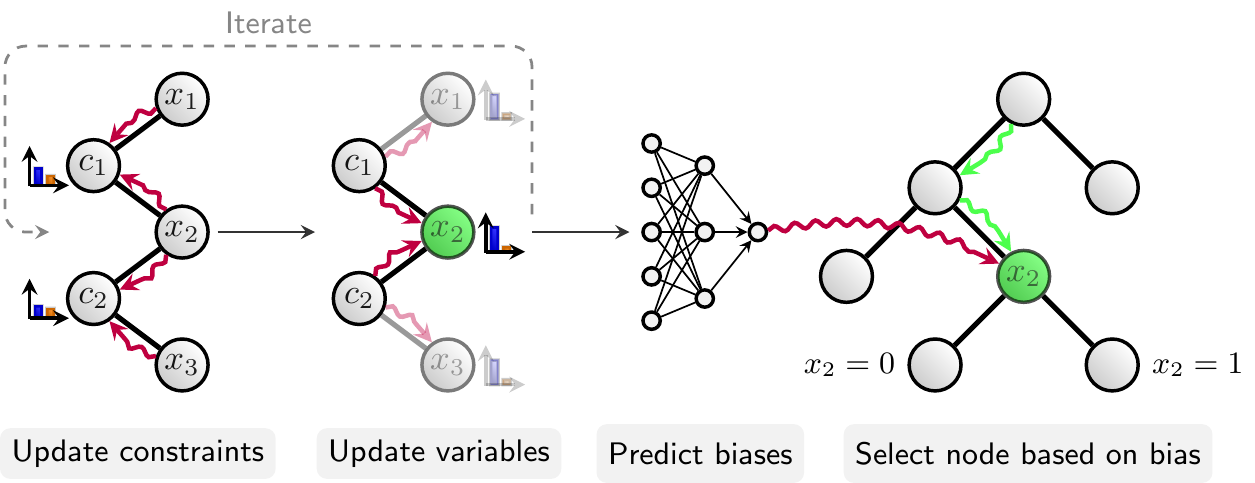}
    \caption{MIP-GNN predicts variables biases for node selection.}    \label{overview}
\end{figure}

\subsection {Related Work} 

\xhdr{GNNs} Graph neural networks (GNNs)~\cite{Gil+2017,Sca+2009} have emerged as a flexible framework for machine learning on graphs and relational data. Notable instances of this architecture include, e.g.,~\cite{Duv+2015,Ham+2017,Vel+2018}, and the spectral approaches proposed in, e.g.,~\cite{Bru+2014,Defferrard2016,Kip+2017}---all of which descend from early work in~\cite{bas+1997,Kir+1995,Mer+2005,mic+2009,mic+2005,Sca+2009,Spe+1997}. Surveys of recent advancements in GNN techniques can be found in~\cite{Cha+2020,Wu+2019}.

\xhdr{Machine learning for CO} \citet{Bengio2018} discuss and review machine learning approaches to enhance combinatorial optimization. Concrete examples include the imitation of computationally demanding variable selection rules within the branch-and-cut framework~\cite{khalil2016,zarpellon2020}, learning to run (primal) heuristics~\cite{khalil2017a,chmiela2021learning}, learning decompositions of large MILPs for scalable heuristic solving~\cite{song2020}, or leveraging machine learning to find (primal) solutions to stochastic integer problems quickly~\cite{bengio2020}. See~\cite{kotary2021end} for a high-level overview of recent advances. 

\xhdr{GNNs for CO} Many prominent CO problems involve graph or relational structures, either directly given as input or induced by the variable-constraint interactions. Recent progress in using GNNs to bridge the gap between machine learning and combinatorial optimization are surveyed in~\cite{Cap+2021}. Works in the field can be categorized between approaches directly using GNNs to output solutions without relying on solvers and approaches replacing the solver's heuristic components with data-driven ones. 

Representative works of the first kind include~\cite{dai2017learning}, where GNNs served as the function approximator for the value function in a Deep $Q$-learning (DQN) formulation of CO on graphs. The authors used a GNN to embed nodes of the input graph. Through the combination of GNN and DQN, a greedy node selection policy is learned on a set of problem instances drawn from the same distribution. \citet{kool2018attention} addressed routing-type problems by training an encoder-decoder architecture, based on Graph Attention Networks~\cite{Vel+2018}, by using an Actor-Critic reinforcement approach. \citet{joshi2019efficient} proposed the use of residual gated graph convolutional networks~\cite{bresson2017residual} in a supervised manner to predict solutions to the traveling salesperson problem. \citet{Fey+2020} and~\citet{Yuj+2019} use GNNs for supervised learning of matching or alignment problems, whereas \citet{Kur+2020,Sel+2019a,Sel+2019} used them to find assignments satisfying logical formulas. Moreover, \citet{Toe+2019} and \citet{Kar+2020} explored unsupervised approaches, encoding constraints into the loss function. 

Works that integrate GNNs in a combinatorial solver conserve its  optimality guarantees. \citet{Gas+2019} proposed to encode the variable constraint interaction of a MILP as a bipartite graph and trained a GNN in a supervised fashion to imitate costly variable selection rules within the branch-and-cut framework of the SCIP solver~\cite{GamrathEtal2020ZR}. Building on that,~\citet{Gupta2020} proposed a hybrid branching model using a GNN at the initial decision point and a light multilayer perceptron for subsequent steps, showing improvements on pure CPU machines. Finally, \citet{nair2020solving} expanded the GNN approach to branching by implementing a GPU-friendly parallel linear programming solver using the alternating direction method of multipliers that allows scaling the strong branching expert to substantially larger instances, also combining this innovation with a novel GNN approach to diving.

Closest to the present work, \citet{Ding2019} used GNNs on a tripartite graph consisting of variables, constraints and a single objective node, enriched with hand-crafted node features. The target is to predict the 0-1 values of the so-called \emph{stable variables}, i.e., variables whose assignment does not change over a set of pre-computed feasible solutions. The predictions then define a ``local branching" constraint~\cite{Fis+2003} which can be used in an exact or heuristic way; the latter restricts the search to solutions that differ in at most a handful of the variables predicted to be ``stable". MIP-GNN differs in the prediction target (biases vs. stable variables), the labeling strategy (leveraging the ``solution pool" of modern MIP solvers vs. primal heuristics), not relying on any feature engineering, flexible GNN architecture choices and evaluation, more robust downstream uses of the predictions via node selection and warm-starting, and tackling much more challenging problem classes.

\section{Preliminaries}

\subsection{Notation}
Let $[n] = \{ 1, \dotsc, n \} \subset \NN$ for $n \geq 1$, and let $\{\!\!\{ \dots\}\!\!\}$ denote a multiset. A \new{graph} $G$ is a pair $(V,E)$ with a \emph{finite} set of \new{nodes} $V$ and a set of \new{edges} $E \subseteq V \times V$. In most cases, we interpret $G$ as an undirected graph. We denote the set of nodes and the set of edges of $G$ by $V(G)$ and $E(G)$, respectively. We enrich the nodes and the edges of a graph with features, i.e., a mapping $l \colon V(G) \cup E(G) \to \mathbb{R}^{d}$. Moreover, $l(x)$ is a \new{feature} of $x$, for $x$ in $V(G) \cup E(G)$. The \new{neighborhood} of $v$ in $V(G)$ is denoted by $N(v) = \{ u \in V(G) \mid (v, u) \in E(G) \}$. A \new{bipartite graph} is a tuple $(A,B,E)$, where $(A \sqcup B,E)$ is a graph, and every edge connects a node in $A$ with a node in $B$. Let $f \colon S \to \mathbb{R}^d$ for some arbitrary domain $S$. Then $\mathbf{f}(s) \in \mathbb{R}^d$,  $s$ in $S$, denotes the real-valued vector resulting from applying $f$ entry-wise to $s$. Last, $[\cdot]$ denotes column-wise (vector) concatenation. 

\subsection{Linear and Mixed-Integer Programs}\label{bip}
A \new{linear program} (LP) aims at optimizing a linear function over a feasible set described as the intersection of finitely many halfspaces, i.e., a polyhedron. We restrict our attention to feasible and bounded LPs. Formally, an instance $I$ of an LP is a tuple $(\mathbf{A},\mathbf{b},\mathbf{c})$, where $\mathbf{A}$ is a matrix in $\bbQ^{m \times n}$, and $\mathbf{b}$ and ${c}$ are vectors in $\bbQ^{m}$ and $\bbQ^{n}$, respectively.
We aim at finding a vector $\mathbf{x}^*$ in $\bbQ^{n}$ that minimizes $\mathbf{c}^T \mathbf{x}^*$ over the \emph{feasible set}
\begin{align*}
	F(I) = \{ \mathbf{x} \in \bbQ^n \mid &\mathbf{A}_j \mathbf{x} \leq b_j \text{ for } j \in [m] \text{ and }  \\
	& x_i \geq 0 \text{ for } i \in [n]  \}.
\end{align*}
In practice, LPs are solved using the Simplex method or (weakly) polynomial-time interior point methods~\cite{Ber+2007}. Due to their continuous nature, LPs are not suitable to encode the feasible set of a CO problem. Hence, we extend LPs by adding \new{integrality constraints}, i.e., requiring that the value assigned to each entry of $\mathbf{x}$ is an integer. Consequently, we aim to find the vector $\mathbf{x}^*$ in $\bbZ^{n}$ that minimizes $\mathbf{c}^T \mathbf{x}^*$ over the feasible set
\begin{align*}
	F_{\textsf{Int}}(I) = \{ \mathbf{x} \in \bbZ^n \mid &\mathbf{A}_j \mathbf{x} \leq b_j  \text{ for } j \in [m] ,  x_i \geq 0, \text{ and }\\ & x_i \in \mathbb{Z} \text{ for } i \in [n]  \},
\end{align*}
and the corresponding problem is denoted as \emph{integer linear program} (ILP).
A component of $\mathbf{x}$ is a \new{variable}, and $\cV(I) = \{x_i \}_{i \in [n]}$ is the set of variables. If we restrict the variables' domains to the set $\{0,1\}$, we get a \emph{binary linear program} (BLP). By dropping the integrality constraints, we again obtain an instance of an LP, denoted $\widehat{I}$, which we call \emph{relaxation}.

\xhdr{Combinatorial solvers} Due to their generality, BLPs and MILPs can encode many well-known CO problems which can then be tackled with branch and cut, a form of tree search which is at the core of all state-of-the-art solving software, e.g.,~\cite{CPLEX1210,GamrathEtal2020ZR,gurobi}. Here, branching attempts to bound the optimality gap and eventually prove optimality by recursively dividing the feasible set and solving LP relaxations, possibly strengthened by cutting planes, to prune away subsets that cannot contain the optimal solution~\cite{Nem+1988}. To speed up convergence, one often runs a number of heuristics or supplies the solver with an initial \new{warm-start solution} if available. Throughout the algorithm's execution, we are left with two main decisions, which node in the tree to consider next (\new{node selection}) and which variable to branch on (\emph{branching variable selection}). 

\subsection{Graph Neural Networks}
Let $G=(V,E)$ be a graph with initial features $f^{(0)} \colon V(G) \to \mathbb{R}^{1\times d}$, e.g., encoding prior or application-specific knowledge. 
A GNN architecture consists of a stack of neural network layers, where each layer aggregates local neighborhood information, i.e., features of neighbors, and then passes this aggregated information on to the next layer.  In each round or layer $t>0$, a new feature $\mathbf{f}^{(t)}(v)$ for a node $v$ in $V(G)$ is computed as
\begin{equation}\label{eq:gnngeneral}
f^{\mathbf{W_2}}_{\text{merge}}\Big(\mathbf{f}^{(t-1)}(v) ,f^{\mathbf{W_1}}_{\text{aggr}}\big(\oms  \mathbf{f}^{(t-1)}(w) \mid  w \in N(v) \cms \big)\Big),
\end{equation}
where $f^{\mathbf{W_1}}_{\text{aggr}}$ aggregates over the set of neighborhood features and $f^{\mathbf{W_2}}_{\text{merge}}$ merges the nodes's representations from step $(t-1)$ with the computed neighborhood features. Both $f^{\mathbf{W_1}}_{\text{aggr}}$ and $f^{\mathbf{W_2}}_{\text{merge}}$ may be arbitrary differentiable functions (e.g., neural networks), while $\mathbf{W_1}$ and $\mathbf{W_2}$ denote sets of parameters.

\section{Proposed Method}\label{mipgnn}

\subsection{Variable Biases and Setup of the Learning Problem}

Let $\cC$ be a set of instances of a CO problem, possibly stemming from a real-world distribution. Further, let $I$  in $\cC$ be an instance with a corresponding BLP formulation $(\mathbf{A},\mathbf{b},\mathbf{c})$. Then, let 
\begin{equation*}
   F^*_{\varepsilon}(I) = \{ \mathbf{x} \in F_{\textsf{Int}}(I) \colon |\mathbf{c}\trans\mathbf{x}^* - \mathbf{c}\trans\mathbf{x}| \leq \varepsilon \} 
\end{equation*}
be a set of feasible solutions that are  close to the  optimal solution $\mathbf{x}^*$ for the instance $I$, i.e., their objective values are within a tolerance $\varepsilon>0$ of the optimal value. The vector of \new{variable biases} $\mathbf{\bar{b}}(I)\in \bbR^n$ of $I$ w.r.t. to $F^*_{\varepsilon}(I)$ is the component-wise average over all elements in $F^*_{\varepsilon}(I)$, namely
\begin{equation}
    \label{eq:bias}
    \mathbf{\bar{b}}(I) = \nicefrac{1}{|F^*_{\varepsilon}|}\sum_{\mathbf{x} \in F^*_{\varepsilon}(I)} \mathbf{x}.
\end{equation}

Computing variable biases is expensive as it involves computing near-optimal solutions. To that end, we aim at devising a neural architecture and training a corresponding model in  a supervised fashion to predict the variable biases $\mathbf{\bar{b}}(I)$ for unseen instances. Letting $\cD$ be a distribution over $\cC$ and $S$ a finite training set sampled uniformly at random from $\cD$, we aim at learning a function $f_{\theta} \colon \cV(I) \to \RR$, where $\theta$ represents a set of parameters from the set $\Theta$, that predicts the variable biases of previously unseen instances. To that end, we minimize the empirical error
\begin{equation*}
\min_{\theta \in \Theta} \nicefrac{1}{|S|}\sum_{I \in S} \ell(\mathbf{f}_{\theta}(\cV(I)),\mathbf{\bar{b}}(I)),
\end{equation*}
with some loss function $\ell \colon \bbR^n \times \bbR^n \to \bbR$ over the set of parameters $\Theta$, where we applied $f_{\theta}$ entry-wise over the set of binary variables $\cV(I)$.

\subsection{The MIP-GNN Architecture}
Next, we introduce the MIP-GNN architecture that represents the function $f_{\theta}$. Intuitively, as mentioned above, we encode a given BLP as a bipartite graph with a node for each variable and each constraint. An edge connects a variable node and constraint node iff the variable has a non-zero coefficient in the constraint. Further, we can encode side information, e.g., the objective's coefficients and problem-specific expert knowledge, as node and edge features. 
Given such an encoded BLP, the MIP-GNN aims to learn an embedding, i.e., a vectorial representation of each variable, which is subsequently fed into a multi-layer perceptron (MLP) for predicting the corresponding bias. To learn meaningful variable embeddings that are relevant to bias prediction, the MIP-GNN consists of two passes, the \new{variable-to-constraint} and the \new{constraint-to-variable} pass; see~\cref{overview}.

In the former, each variable passes its current variable embedding to each adjacent constraint, updating the constraint embeddings. To guide the MIP-GNN in finding meaningful embeddings, we compute an error signal that encodes the degree of violation of a constraint with the current variable embedding. Together with the current constraint embeddings, this error signal is sent back to adjacent variables, effectively propagating information throughout the graph.  

Formally, let $I = (\mathbf{A}, \mathbf{b}, \mathbf{c})$ be an instance of a BLP, which we encode as a bipartite graph $B(I) = (V(I),C(I),E(I))$. Here, the node set $V(I) = \{ v_i \mid x_i \in \cV(I) \}$ represents the variables, the node set $C(I) = \{ c_i \mid i \in [m]  \}$ represents the constraints of $I$, and the edge set $E(I) = \{ \{ v_i,c_j \} \mid A_{ij} \neq 0 \}$ represents their interaction.  Further, we define the (edge) feature function $a \colon E(I) \to \bbR$ as $(v_i,v_j) \mapsto A_{ij}$. Moreover, we may add features encoding additional, problem-specific information, resulting in the feature function $l \colon V(I) \cup C(I)\to \mathbb{R}^d$. In full generality, we implement the variable-to-constraint (v-to-c) and the constraint-to-variable (c-to-v) passes as follows. 

\xhdr{Variable-to-constraint pass} Let $\mathbf{v}^{(t)}_i$ and $\mathbf{c}^{(t)}_j$ in $\mathbb{R}^d$ be the variable embedding of variable $i$ and the constraint embedding of node $j$, respectively, after $t$ c-to-v and v-to-c passes. For $t=0$, we set  $\mathbf{v}^{(t)}_i = l(v_i)$ and $\mathbf{c}^{(t)}_j = l(c_j)$. To update the constraint embedding, we set $\mathbf{c}_j^{(t+1)} =$
\begin{equation*}
 f^{\mathbf{W_{2,C}}}_{\text{merge}}\Big(  \mathbf{c}_j^{(t)}  ,f^{\mathbf{W_{1,C}}}_{\text{aggr}}\big(\oms [\mathbf{v}^{(t)}_i,A_{ji}, b_j]  \mid  v_i \in N(c_j) \cms \big)\Big),
\end{equation*}
where, following~\Cref{eq:gnngeneral},  $f^{\mathbf{W_{1,C}}}_{\text{aggr}}$ aggregates over the multiset of adjacent variable embeddings, and $f^{\mathbf{W_{2,C}}}_{\text{merge}}$ merges the constraint embedding from the $t$-th step with the learned, joint variable embedding representation.

\xhdr{Constraint-to-variable pass} To guide the model to meaningful variable embedding assignments, i.e., those that align with the problem instance's constraints, we propagate \new{error messages}. For each constraint, upon receiving a variable embedding $\mathbf{v}^{(t)}_i$ in $\mathbb{R}^{d}$, we use a neural network $f^{\mathbf{W_a}}_{\text{asg}} \colon \mathbb{R}^{d} \to \mathbb{R}$  to assign a scalar value  $\bar{x}_i = f^{\mathbf{W_a}}_{\text{asg}}(\mathbf{v}^{(t)}_i)$ in $\mathbb{R}$ to each variable, resulting in the vector $\mathbf{\bar{x}}$ in $\mathbb{R}^{|V(I)|}$, and compute the \new{normalized residual}
\begin{equation*}\label{err}
\mathbf{e} = \text{softmax}\big(\mathbf{A} \mathbf{\bar{x}} - \mathbf{b}\big),
\end{equation*}
where we apply a softmax function column-wise, indicating how much the $j$-th constraint, with its current assignment, contributes to the constraints' violation in total. The error signal $\mathbf{e}$ is then propagated back to adjacent variables. That is, to update the variable embedding of node $v_i$, we set $\mathbf{v}_i^{(t+1)} =$
\begin{equation*}
f^{\mathbf{W_{2,V}}}_{\text{merge}}\Big(  \mathbf{v}_i^{(t)}  ,f^{\mathbf{W_{1,V}}}_{\text{aggr}}\big(\oms [\mathbf{c}^{(t)}_j,A_{ji}, b_j, e_j]  \mid  c_j \in N(v_i) \cms \big)\Big).
\end{equation*}
The v-to-c and c-to-v layers are interleaved. 
Moreover, the column-wise concatenation of the variable embeddings over all layers is fed into an MLP, predicting the variable biases. 

MIP-GNN has been described, implemented, and tested on pure binary linear programs, as such problems are already quite widely applicable. However, extensions to general integer variables and mixed continuous/integer problems are possible; see the appendix for a discussion.

\subsection{Simplifying Training}\label{sec:guiding}
Intuitively, predicting whether a variable's bias is closer to $0$ or $1$ is more important than knowing its exact value if one wants to find high-quality solutions quickly. Hence, to simplify training and make predictions more interpretable, we resort to introducing a threshold value $\tau \geq 0$ to transform the original bias prediction---a regression problem---to a classification problem by interpreting the bias $\mathbf{\bar{b}}(I)_i$ of the $i$-th variable of a given BLP $I$ as $0$ if $\mathbf{\bar{b}}(I)_i \leq \tau$ and $1$ otherwise. In the experimental evaluation, we empirically investigate the influence of different choices of $\tau$ on the downstream task. Henceforth, we assume $\mathbf{\bar{b}}(I)_i$ in $\{ 0,1 \}$. Accordingly, the output of the MLP, predicting the variable bias, is a vector $\widehat{\mathbf{p}}$ in $[0,1]^n$.

\subsection{Guiding a Solver with MIP-GNN}

\label{sec:guiding}
Next, we exemplify how MIP-GNN's bias predictions can guide two performance-critical components of combinatorial solvers, node selection and warm-starting; a use case in branching variable selection is discussed in the appendix.

\xhdr{Guided node selection} The primary use case for the bias predictions will be to guide \textit{node selection} in the branch-and-bound algorithm. The node selection strategy will use MIP-GNN predictions to score ``open nodes" of the search tree. If the predictions are good, then selecting nodes that are consistent with them should bring us closer to finding a good feasible solution quickly. To formalize this intuition, we first define a kind of \textit{confidence score} for each prediction as $$\texttt{score}(\widehat{\mathbf{p}}_i)=1-\big|\widehat{\mathbf{p}}_i-\round{\widehat{\mathbf{p}}_i}\big|,$$ where $\round{\cdot}$ rounds to the nearest integer and $\widehat{\mathbf{p}}_i$ is the prediction for the $i$-th binary variable. Predictions that are close to 0 or 1 get a high score (max. 1) and vice versa (min. 0.5). The score of a node is then equal to the sum of the confidence scores (or their complements) for the set of variables that are fixed (via branching) at that node. More specifically, $\texttt{node-score}(N;\widehat{\mathbf{p}})$ is set to
  \begin{equation*}
    \sum_{i\in\text{ fixed-vars}(N)}
    \begin{cases}
      \texttt{score}(\widehat{\mathbf{p}}_i), & \text{if}\ x^N_i=\round{\widehat{\mathbf{p}}_i}, \\
      1-\texttt{score}(\widehat{\mathbf{p}}_i), & \text{otherwise},
    \end{cases}
  \end{equation*}
where $\mathbb{I}\{\cdot\}$ is the indicator function and $x^N_i$ is the fixed value of the $i$-th variable at node $N$. When the fixed value is equal to the rounding of the corresponding prediction, the variable receives $\texttt{score}(\widehat{\mathbf{p}}_i)$; otherwise, it receives the complement, $1-\texttt{score}(\widehat{\mathbf{p}}_i)$. As such, $\texttt{node-score}$ takes into account both how confident the model is about a given variable, and also how much a given node is aligned with the model's bias predictions. Naturally, deeper nodes in the search tree can accumulate larger $\texttt{node-score}$ values; this is consistent with the intuition (folklore) of depth-first search strategies being typically useful for finding feasible solutions.

As an example, consider a node $N_1$ resulting from fixing the subset of variables $x_1=0, x_4=1, x_5=0$; assume the MIP-GNN model predicts biases $0.2, 0.8, 0.9$, respectively. Then, $\texttt{node-score}(N_1; \widehat{\mathbf{p}}) = \texttt{score}(\widehat{\mathbf{p}}_1) + \texttt{score}(\widehat{\mathbf{p}}_4) + (1-\texttt{score}(\widehat{\mathbf{p}}_5))=0.8+0.8+(1-0.9)=1.7$. Another node $N_2$ whose fixing differs only by $x_5=1$ would have a higher score due to better alignment between the value of $x_5$ and the corresponding bias prediction of $0.9$.

While the prediction-guided node selection strategy may lead to good feasible solutions quickly and thus improve the primal bound, it is preferable that the dual bound is also moved. To achieve that, we periodically select the node with the best bound rather than the one suggested by the bias prediction. In the experiments that follow, that is done every 100 nodes.

\xhdr{Warm-starting}
Another use of the bias predictions is to attempt to directly construct a feasible solution via rounding. To do so, the user first defines a \textit{rounding threshold} $p_{\min}$ in $[0.5, 1)$. Then, a variable's bias prediction is rounded to the nearest integer if and only if $\texttt{score}(\widehat{\mathbf{p}}_i) \geq p_{\min}$. With larger $p_{\min}$, fewer variables will be eligible for rounding. Because some variables may have not been rounded, we leverage ``solution repair"\footnote{{\url{https://ibm.com/docs/en/icos/12.10.0?topic=mip-starting-from-solution-starts}}}, a common feature of modern MIP solvers that attempts to complete a partially integer solution for a limited amount of time. Rather than use a single rounding threshold $p_{\min}$, we iterate over a small grid of values $\{ 0.99, 0.98, 0.96, 0.92, 0.84, 0.68\}$ and ask the solver to repair the partial rounding. The resulting integer-feasible solution, if any, can then be returned as is or used to warm-start a branch-and-bound search.

\subsection{Discussion: Limitations and Possible Roadmaps}
In the following, we address limitations and challenges within the MIP-GNN architecture, and discuss possible solutions.

\xhdr{Dataset generation} Making the common assumption that the complexity classes \textsf{NP} and \textsf{co-NP} are not equal,~\citet{Yehuda2020} showed that any polynomial-time sample generator for \textsf{NP}-hard problems samples from an easier sub-problem. However, it remains unclear how these results translate into practice, as real-world instances of CO problems are rarely worst-case ones. Moreover, the bias computation relies on near-optimal solutions, which state-of-the-of-art MIP solver, e.g., CPLEX, can effectively generate but with non-negligible overhead in computing time~\cite{danna2007generating}. In our case, we spend 60 minutes per instance, for example. This makes our approach most suitable for very challenging combinatorial problems with available historical instances that can be used for training.

\xhdr{Limited expressiveness} Recent results, e.g.,~\cite{Maron2019,Morris2019a,Xu2019}, indicate that GNNs only offer limited expressiveness. Moreover, the equivalence between (universal) permutation-equivariant function approximation and the graph isomorphism problem~\cite{Chen2019a}, coupled with the fact that graph isomorphism for bipartite graphs is $\mathsf{GI}$-complete~\cite{Ueh+2005}, i.e., at least as hard as the general graph isomorphism problem, indicate that GNNs will, in the worst-case, fail to distinguish different (non-isomorphic) MILPs or detect discriminatory patterns within the given instances. Contrary to the above negative theoretical results, empirical research, e.g.,~\cite{Gas+2019,nair2020solving,Sel+2019}, as well as the results of our experimental evaluation herein, clearly show the real-world benefits of applying GNNs to bipartite graphs. This indicates a gap between worst-case theoretical analysis and practical performance on real-world distributions.

Nevertheless, in Proposition 1 of the appendix, we leverage a connection to the \emph{multiplicative weights update algorithm}~\cite{Aro+2012} to prove that, under certain assumptions, GNNs are capable of outputting feasible solutions of the underlying BLPs relaxation, minimizing the MAE to (real-valued) biases on a given (finite) training set.

\begin{figure*}[ht!]
\captionsetup[subfigure]{font=large}

   \centering
     \begin{subfigure}[b]{0.47\textwidth}
      \centering
  \includegraphics[scale=0.40]{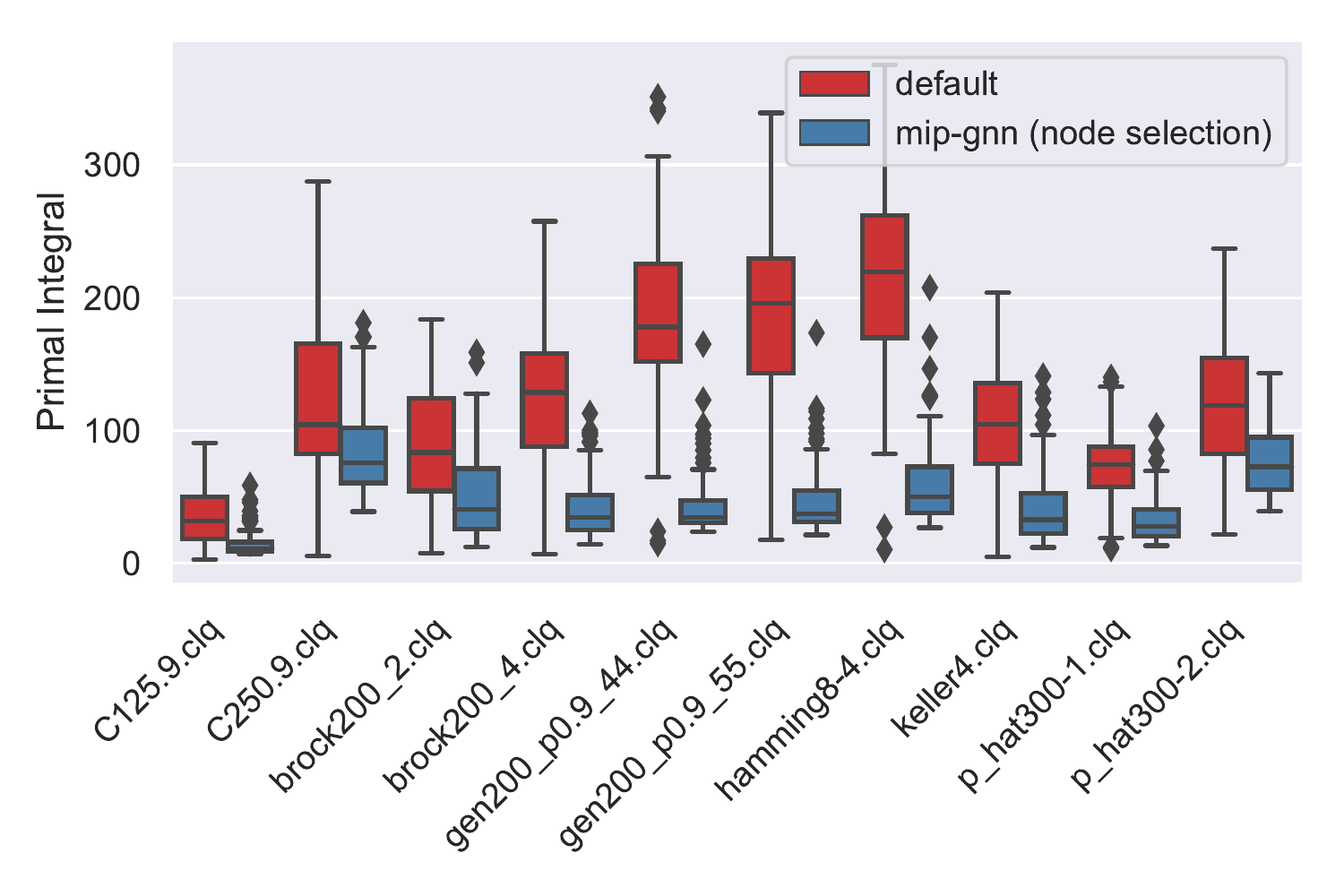}
    \subcaption{Box plots for the distribution of \textbf{Primal Integrals} for the ten problem sets, each with 100 instances; lower is better.}
    \label{fig:gisp_box_primal}
    \end{subfigure}\hspace{10pt}
      \begin{subfigure}[b]{0.47\textwidth}
       \centering
 \includegraphics[scale=0.40]{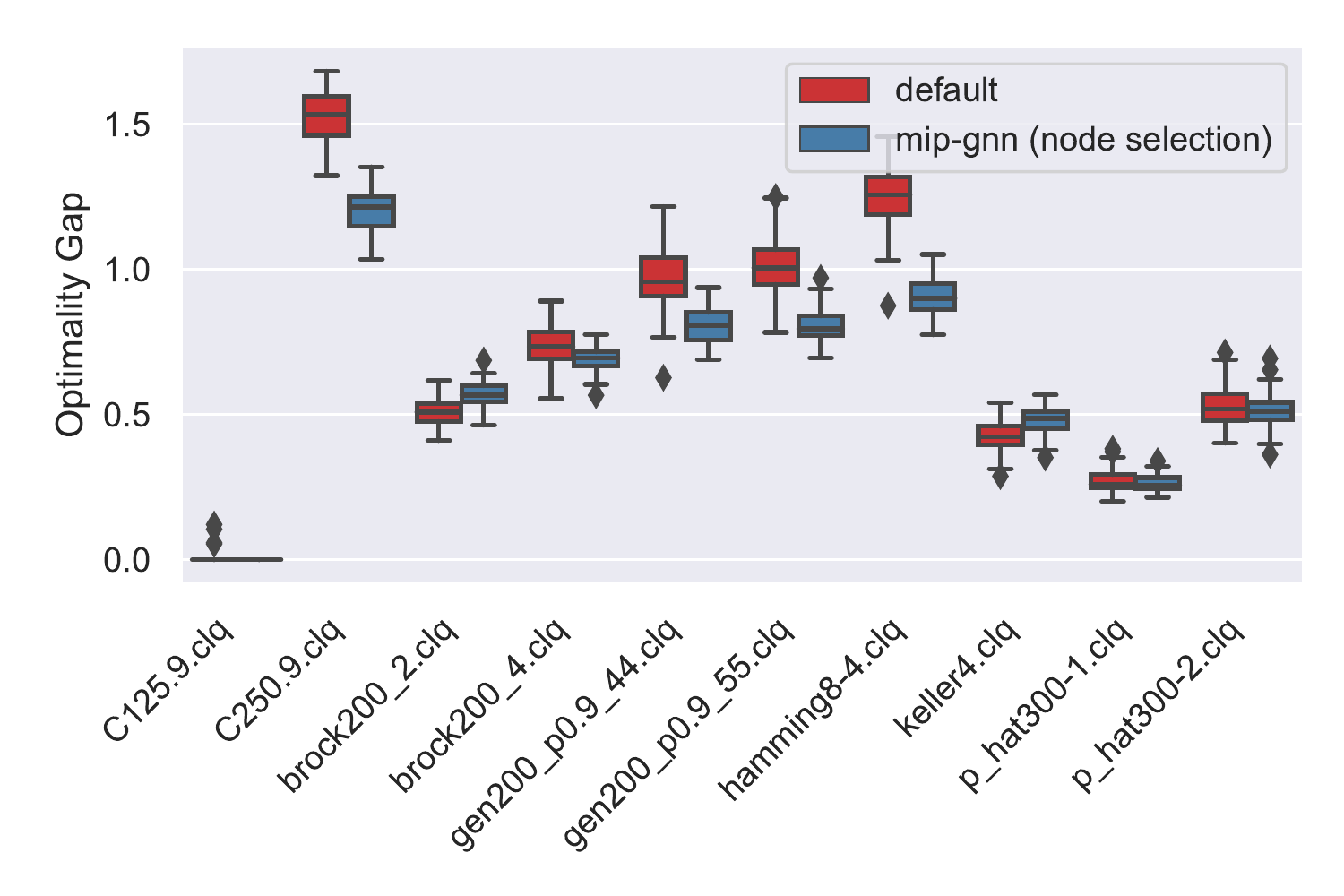}
    \subcaption{Box plots for the distribution of the \textbf{Optimality Gaps} at termination for all problem sets; lower is better.}
    \label{fig:gisp_box_gap}
    \end{subfigure}\vspace{10pt}
 \begin{subfigure}[b]{0.47\textwidth}    
     \centering
    \includegraphics[scale=0.4]{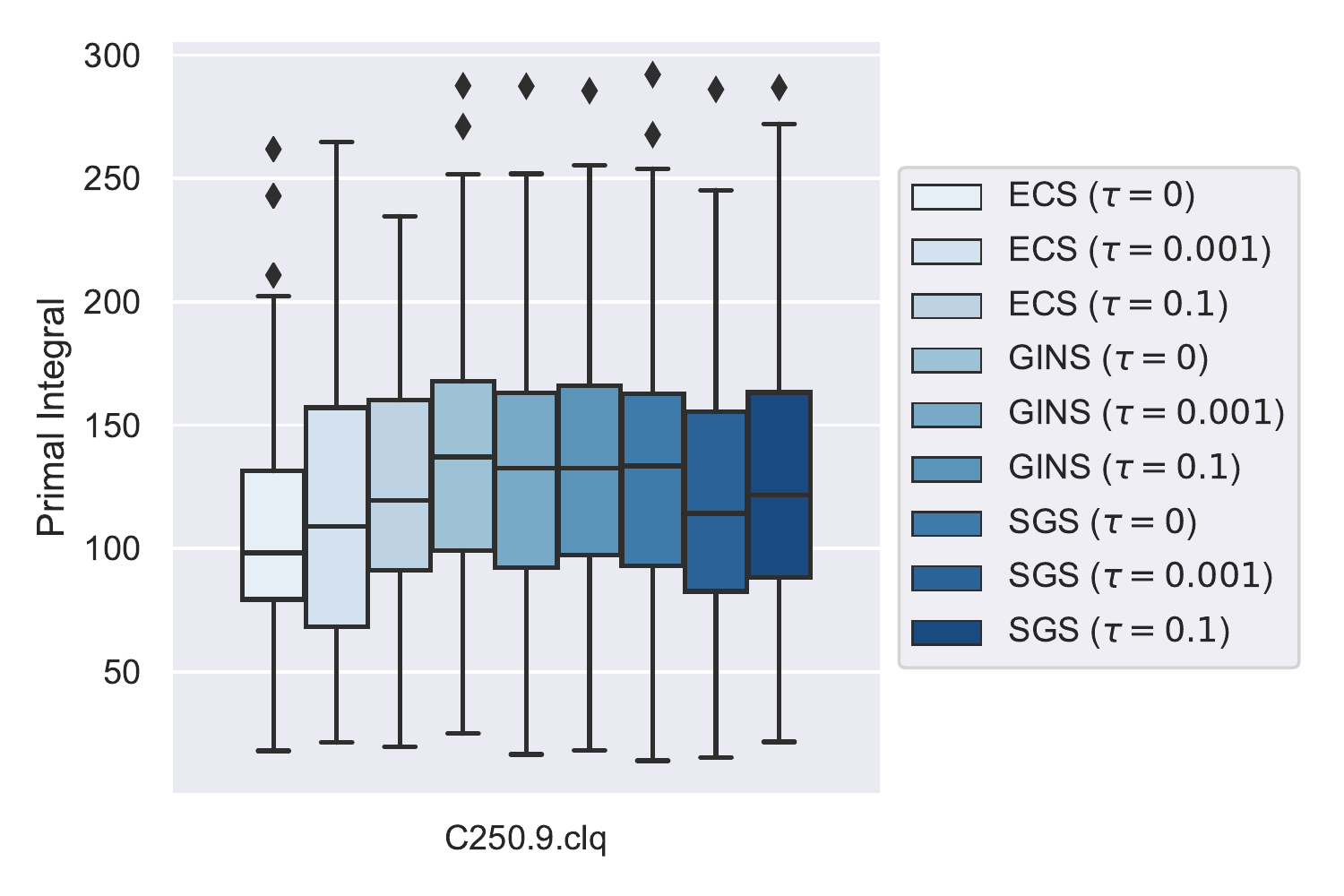}
    \caption{Comparison of three GNN architectures with different $\tau$ values used in training on a single GISP problem set; lower primal integral values are better. The performance impact of the threshold depends on the GNN architecture, with a more pronounced effect for the EdgeConvolution architecture (ECS).}  
    \label{fig:gisp_architectures_box_primal}
     \end{subfigure}\hspace{10pt}
     \begin{subfigure}[b]{0.47\textwidth}
          \centering
    \includegraphics[scale=0.40]{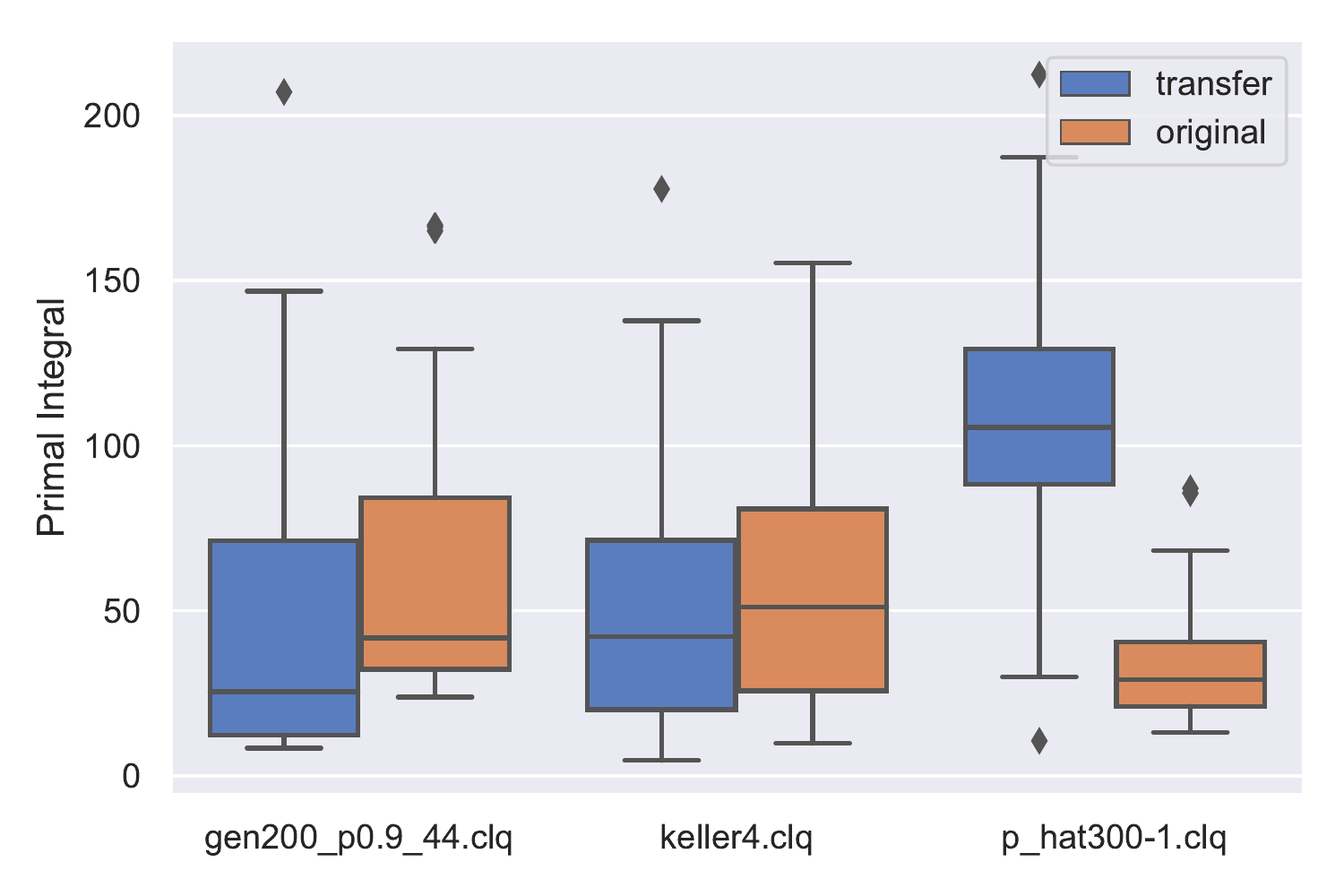}
    \caption{Transfer learning on GISP; Box plots for the distribution of \textbf{Primal Integrals} for three problem sets; lower is better. ``original" refers to the performance of a model trained on instances from the same distribution; ``transfer" refers to that of a model trained on another distribution.}
    \label{fig:gisp_transfer_box_primal}
    \end{subfigure}
    \vspace{5pt}
    \caption{Generalized Independent Set Problem results. }
\end{figure*}

\section{Experimental Evaluation}\label{exp}
Next, we investigate the benefits of using MIP-GNN within a state-of-the-art solver on challenging BLP problems. In this section, we will focus primarily on node selection and to a lesser extent warm-starting. Results on using MIP-GNN to guide variable selection are provided in the appendix. 
We would like to highlight two key features of our experimental design: \textbf{(1)} We use the CPLEX solver, with all of its advanced features (presolve, cuts, heuristics), both as a backend for our method and as a baseline. As such, our implementation and comparisons to CPLEX closely resemble how hard MIPs are solved in real applications, rather than be confined to less advanced academic solvers or artificial case studies; \textbf{(2)} We evaluate our method on two classes of problems that are simultaneously important for operations research applications \textit{and} extremely difficult to find good solutions for or solve to optimality. In contrast, we have attempted to apply MIP-GNN to instances provided by~\citet{Ding2019} and found them to be extremely easy for CPLEX which solves them to global optimality in seconds on average. Because machine learning is unlikely to substantially reduce such already small running times, we believe that tackling truly challenging tasks, such as those we study here, is where learning holds most promise.

\xhdr{Data collection}
Given a BLP training instance, the main data collection step is to estimate the variable biases. To do so, we must collect a set of high-quality feasible solutions. We leverage a very useful feature of modern solvers: the solution pool, particularly CPLEX's implementation.\footnote{{\url{https://ibm.com/docs/en/icos/12.10.0?topic=solutions-what-is-solution-pool}}} This feature repurposes the solver to collect a large number of good integer solutions, rather than focus on proving optimality. For our dataset, we let CPLEX spend 60 minutes in total to construct this solution pool for each instance, terminating whenever it has found $1000$ solutions with objective values within $10\,\%$ of the best solution found, or when the 60-minute limit is reached. The variable biases are calculated according to Eq.~\eqref{eq:bias}.

\xhdr{Neural architecture} To implement the two passes of the MIP-GNN described above, we used the \textsc{GIN-$\varepsilon$}~\cite{Xu2019} (\textsc{GIN}) and GraphSage~\cite{Ham+2017} (\textsc{Sage}) architectures for both passes, with and without error propagation. To deal with continuous edge features, we used a $2$-layer MLP to map them to the same number of components as the node features and combined them using summation. Further, we implemented a variant of EdgeConvolution~\cite{Sim+2017} (EC), again with and without error propagation, handling edge features, in a natural way; see the appendix for details. For node features, we only used the corresponding objective function coefficient and node degree for variable nodes, and the right-hand side coefficient (i.e., the corresponding component of $\mathbf{b}$) and the node degree (i.e., the number of nonzeros in the constraint) for constraint nodes. For edge features, we used the corresponding entry in $\mathbf{A}$.

For all architectures, we used mean aggregation and a feature dimension of $64$. Four GNN layers were used, i.e., four interleaved variable-to-constraint and constraint-to-variable passes, followed by a $4$-layer MLP for the final classification. 

\xhdr{Benchmark problems}
The \emph{generalized independent set problem} (GISP)~\cite{Col+2017} and \emph{fixed-charge multi-commodity network flow problem} (FCMNF)~\cite{hewitt2010combining} have been used to model a variety of applications including forest harvesting~\cite{hochbaum1997forest} and supply chain planning, respectively. Importantly, it is straightforward to generate realistic instances of GISP/FCMNF that are extremely difficult to solve or even find good solutions for, even when using a commercial solver such as CPLEX. 
For each problem set (10 from GISP, 1 from FCMNF), there are 1000 training instances and 100 test instances. These instances have thousands to tens of thousands of variables and constraints, with the GISP problem sets varying in size, as described in Table 2 of~\cite{Col+2017}. Appendix section ``Data generation" includes additional details.

\xhdr{Baseline solver} We use CPLEX 12.10.0 as a backend for data collection and BLP solving. CPLEX ``control callbacks" allowed us to integrate the methods described in Section \ref{sec:guiding} in the solver. We ask CPLEX to ``emphasize feasibility over optimality" by setting its ``emphasis switch" parameter appropriately\footnote{{\url{https://ibm.com/docs/en/icos/12.10.0?topic=parameters-mip-emphasis-switch}}}; this setting makes the CPLEX baseline (referred to as ``default" in the results section) even more competitive w.r.t. evaluation metrics that emphasize finding good feasible solutions quickly. We allow CPLEX to use presolve, cuts, and primal heuristics regardless of whether it is being controlled by our MIP-GNN models or not. As such, all subsequent comparisons to ``default" are to this full-fledged version of CPLEX, rather than to any stripped-down version. We note that this is indeed already a very powerful baseline to compare against, as CPLEX has been developed and tuned over three decades by MIP experts, i.e., it can be considered a very sophisticated \textit{human-learned} solver. The solver time limit is 30 minutes per instance.

\xhdr{Experimental protocol} 
During training, 20\% of the training instances were used as a validation set for early stopping. The training algorithm is \textsc{Adam}~\cite{Kin+2015}, which ran for $30$ epochs with an initial learning rate of $0.001$ and exponential learning rate decay with a patience of $10$. Training is done on GPUs whereas evaluation (including making predictions with trained models and solving MILPs with CPLEX) is done on CPUs with a single thread. Appendix section ``CPU/GPU specifications" provides additional details.

\xhdr{Evaluation metrics} 
All subsequent results will be based on test instances that were not seen during any training run. Because MIP-GNN is designed to guide the solver towards good feasible solutions, the widely used ``Primal Integral" metric~\cite{berthold2013measuring} will be adopted, among others, to assess performance compared to the default solver setting. In short, the primal integral can be interpreted as the average solution quality during a time-limited MIP solve. Smaller values indicate that high-quality solutions were found early in the solving process. Quality is relative to a reference objective value; for GISP/FCMNF, it is typically difficult to find the optimal values of the instances, and so we use as a reference the best solution values found by any of the tested methods. We will also measure the optimality gap. Other relevant metrics will be described in the corresponding figure/table.

\subsection{Results and Discussion}
\xhdr{MIP-GNN (node selection) vs. default CPLEX}
In Figure~\ref{fig:gisp_box_primal}, we use box plots to examine the distribution of primal integral values on the ten test problem sets of GISP. Guiding node selection with MIP-GNN predictions (blue) conclusively outperforms the default setting (red) on all test sets. Equally importantly, appendix Table 5 shows that not only is the primal integral better when using MIP-GNN for node selection, but also that the quality of the best solution found at termination improves almost always compared to default. This improvement on the primal side translates into a reduction of the optimality gap for most datasets, as shown in Figure~\ref{fig:gisp_box_gap}. Additional GISP statistics/metrics are in the appendix. 

As for the FCMNF dataset (detailed results in appendix), MIP-GNN node selection also outperforms CPLEX default, leading to better solutions 81\% of test instances (Table~\ref{tab:fcmnf_wtl_bestval}) and smaller primal integrals on 62\% (Table~\ref{tab:fcmnf_primal}).

Appendix figures~\ref{fig:gisp_allmethods/gisp_box_numsols} and~\ref{fig:gisp_allmethods/gisp_box_numsols_lp} shed more light into how MIP-GNN uses affect the solution finding process in the MIP solver. Strategy ``node selection" finds more incumbent solutions (\cref{fig:gisp_allmethods/gisp_box_numsols}) than ``default", but also many more of those incumbents are integer solutions to node LP relaxations (\cref{fig:gisp_allmethods/gisp_box_numsols_lp}). This indicates that this guided node selection strategy is moving into more promising reasons of the search tree, which makes incumbents (i.e., improved integer-feasible solutions) more likely to be found by simply solving the node LP relaxations. In contrast, ``default" has to rely on solver heuristics to find incumbents, which may incur additional running times.

\xhdr{MIP-GNN (warmstart) vs. default CPLEX}
Appendix Table~\ref{tab:gisp_wtl_bestval_warmstart} shows that warm-starting the solver using MIP-GNN consistently yields better final solutions on 6 out of 9 GISP datasets (with one dataset exhibiting a near-tie); Table~\ref{tab:gisp_wtl_gap_warmstart} shows that the optimality gap is also smaller when warm-starting with our models on 6 out of 9 datasets. This is despite our implementation of warm-starting being quite basic, e.g., the rounding thresholds are examined sequentially rather than in parallel, which means that a non-negligible amount of time is spent during this phase before CPLEX starts branch and bound. We do note, however,  
that guided node selection seems to be the most suitable use of MIP-GNN predictions.

\xhdr{Transfer learning}
Does a MIP-GNN model trained on one set still work well on other, slightly different sets from the same optimization problem? Figure~\ref{fig:gisp_transfer_box_primal} shows the primal integral box plots (similar to those of Figure~\ref{fig:gisp_box_primal}) on three distinct GISP problem sets, using two models: ``original" (orange), trained on instances from the same problem set; ``transfer" (blue), trained on a problem set that is different from all the others. On the first two problem sets, the ``transfer" model performs as well or better than the ``original" model; on the last, ``original" is significantly better. Further analysis will be required to determine the transfer potential.

\xhdr{GNN architectures}
The MIP-GNN results in Figures~\ref{fig:gisp_box_primal} and~\ref{fig:gisp_box_gap} and Table~\ref{tab:gisp_wtl_bestval} used a \textsc{Sage} architecture with error messages and a threshold of $\tau=0$. Figure~\ref{fig:gisp_architectures_box_primal} compares additional GNN architectures with three thresholds on the GISP problem set C250.9.clq (which has the largest number of variables/constraints). The effect of the threshold only affects the EdgeConvolution (ECS) architecture, with zero being the best.

\section{Conclusions}

We introduced MIP-GNN, a generic GNN-based architecture to guide heuristic components within state-of-the-art MIP solvers. By leveraging the structural information within the MILP's constraint-variable interaction, we trained MIP-GNN in a supervised way to predict variable biases, i.e., the likelihood of a variable taking a value of 1 in near-optimal solutions. On a large set of diverse, challenging, real-world BLPs, we showed a consistent improvement over CPLEX's default setting by guiding node selection without additional feature engineering. 
Crucially, for the first time in this line of research, we used a single, once-trained model for bias prediction to speed up \emph{multiple} components of the MIP solver simultaneously. In other words, we showed that learning the bias associated with sets of near-optimal solutions is empirically beneficial to multiple crucial MIP ingredients. We reported in detail the effect on node selection and warm-starting while also showing promising results for variable selection. Further, our framework is extensible to yet other crucial ingredients, e.g., preprocessing, where identifying important variables can be beneficial.

\section*{Acknowledgements}
CM is partially funded by the German Academic Exchange Service (DAAD) through a DAAD IFI postdoctoral scholarship (57515245) and a DFG Emmy Noether grant (468502433). EK is supported by a Scale AI Research Chair.

\bibliography{aaai}

\appendix

\onecolumn

\section{Implementation details}\label{app:code}
Code, data, and pre-trained models can be found at~\url{https://github.com/lyeskhalil/mipGNN}. See the folder \texttt{code/} for the source code of all used GNN architectures and evaluation protocols for the downstream tasks, the folder  \texttt{code/gnn\_models/pretrained\_models/} for pretrained models,
and \texttt{code/README.md} on how to reproduce the reported results.

\section{Data generation}\label{app:datagen}

For GISP, we generate ten problem sets, each corresponding to a different graph that underlies the combinatorial problem; the graphs are from the DIMACS library. The instance generation procedure follows that of~\cite{Col+2017}, particularly the hardest subset from that paper, which is referred to as ``SET2, $\alpha=0.75$". For FCMNF, we generate a single problem set based on the procedure described in~\cite{hewitt2010combining}. 
\cref{tab:ds_stats} includes statistics on the number of variables and constraints in each of the datasets.

\section{CPU/GPU specifications}
All neural architectures were trained on a workstation with two Nvidia Tesla V100 GPU cards with 32GB of GPU memory running Oracle Linux Server 7.7. Training a model took between 15 and 40 minutes depending on the size of the datasets/instances, including inference times on test and validation sets.

When using the trained models within CPLEX, inference is done on CPU and accounted for in all time-based evaluation metrics.
For CPU-based computations (data collection and MILP solving), we used a cluster with 32-core machines (Intel E5-2683 v4) and 125GB memory. Individual data collection or BLP solving tasks were single-threaded, though we parallelized distinct tasks. 

\section{Guiding branching variable selection with MIP-GNN predictions}
As discussed, using MIP-GNN's bias predictions to guide node selection we aim at finding good feasible solutions quickly. Another way of using these predictions is to guide the \textit{variable selection} strategy of the solver. This can be achieved by asking the solver to prioritize branching on variables whose bias prediction is close to 0 or 1. Formally, we provide the solver with priority scores for each binary variable, computed as $\texttt{score}(\widehat{\mathbf{p}}_i)$, the confidence score defined earlier in this section. Then, given a node $N$ with a set $C(N)$ of candidates for branching (namely, the variables with fractional values in the node relaxation solution), the solver branches on $x_i\coloneqq\arg\max_{i\in C(N)}\{\texttt{score}(\widehat{\mathbf{p}}_i)\}$.

Note that the node selection and branching cases use the predicted bias in a complementary way: for node selection, we look at the variables already fixed, while, for variable selection, we look at those yet to be fixed.

\section{Extension to ILP and MILP}
\label{app:extension}
While we described the MIP-GNN as an architecture for BLPs, it applies to general ILPs as well. That is, we can transform a given ILP instance into a BLP through a binary expansion of each integer (non-binary) variable $x_i$ by assuming an upper bound $U_i$ on it is known or it can be computed (as it is generally the case). This clearly requires the addition of 
$\lceil \log_2 U_i \rceil + 1$ binary variables, which can lead to a large BLP if the $U_i$'s are big. 
Alternatively, we can stick to the given ILP formulation and treat the learning problem as a multi-class instead of a binary classification problem. 

Finally, in the presence of continuous variables, i.e., either MILPs or mixed-binary linear programs, we ignore them and compute the bias only for the integer-constrained variables by assuming that the optimal value of the continuous variables can be computed \emph{a posteriori} by simpling solving an LP (i.e., after fixing the integer variables to their predicted values).

\section{Details on GNN architectures}\label{gnn_expanded}

As outlined in the main paper, we used the \textsc{GIN-$\varepsilon$} (\textsc{GIN})~\cite{Xu2019},  GraphSage~\cite{Ham+2017}, with and without error propagation, to express $f^{\vec{W_{2,X}}}_{\text{merge}}$ and $f^{\vec{W_{1,X}}}_{\text{aggr}}$ for $\vec{X}$ in $\{ \vec{C}, \vec{V} \}$. To deal with continuous edge features, we used a $2$-layer MLP to map them to the same number of components as the node features and combined them using summation. Further, we implemented a variant of EdgeConvolution (EC)~\cite{Sim+2017}, that can handle continuous edge feature in a natural way. That is, a feature $\vec{v}_i^{(t+1)}$ (c-to-v pass) for variable node $v_i$ is computed  as 
\begin{align*}
\sum_{j \in \mathcal{N}(i)}
        h([\vec{v}_i^{(t)}, \vec{c}_j^{(t)}, A_{ji}, b_j, e_j]), 
\end{align*}
where $h$ is a $2$-layer MLP. The  v-to-c pass is computed in an analogous fashion.

\section{Theoretical guarantees}\label{prop}

In the following, we show that, under certain assumptions, GNNs are, in principle, capable of outputting feasible solutions of the relaxation of the underlying BLPs, minimizing the MAE to (real-valued) biases on a given (finite) training set. We do so by leveraging a connection to the \emph{multiplicative weights update algorithm} (MWU), e.g., see~\cite{Aro+2012} for a thorough overview. 

Given a BLP $I = (\vec{A},\vec{b},\vec{c})$, its corresponding relaxation $\widehat{I}$, and $\varepsilon > 0$, a vector $\vec{x}_{\varepsilon}$ in $\mathbb{R}^n$ is a \emph{$\varepsilon$-feasible solution} for the relaxation $\widehat{I}$ if
\begin{align*}
    \vec{A} \vec{x}_{\varepsilon} \geq  \vec{b} - \vec{1}\varepsilon
\end{align*}
holds.\footnote{For technical reasons, without loss of generality, we flipped the relation here. That is, a feasible solution of the relaxation $\widehat{I}$ satifies $\vec{A}_j \vec{x} \geq b_j \text{ for } j \in [m]$.} That is, an $\varepsilon$-feasible solution violates each constraint of $\widehat{I}$ by at most $\varepsilon$. 

Let $\vec{a}$ in $\mathbb{R}^n$ and $b$ in $\mathbb{R}$, then we assume, following, e.g.,~\cite{Aro+2012}, that there exists an oracle $\mathcal{O}$ either returning $\vec{x} \geq 0$ satisfying $\vec{a}\trans \vec{x} \geq b$ or determining that the system is not feasible. Put differently, we assume that we have access to an oracle $\mathcal{O}$ that outputs a feasible solution $\vec{x} \geq 0$ to a linear system with one inequality, i.e., $\vec{p}\trans \vec{A}\vec{x} \geq \vec{p}\trans\vec{b}$, where the vector $\vec{p}$ is from the $(n+1)$-dimensional probability simplex. Again following, e.g.,~\cite{Aro+2012},
we assume that for any $\vec{x}$ in $\mathbb{R}^n$ the oracle $\cO$ outputs it holds that
\begin{align*}
    \vec{A}_j \vec{x} - b_j \in [-\rho, \rho]
\end{align*}
for some $\rho > 0$ and $j$ in $[m]$, resulting in the following assumption.
\begin{assumption}\label{assum}
	There exists a neural unit simulating the oracle $\mathcal{O}$, i.e., returning the same results as the oracle $\mathcal{O}$, given the same input.
\end{assumption}
Although this assumption is rather strict, such oracle can be expressed by standard neural architectures, see, e.g.,~\cite{Per+2019}. Further, given a BLP $I$ or a finite set of BLP, we assume an upper-bound of $\Delta \cdot n > 0$ on the minimum $\ell_1$ distance between a feasible point of the relaxation $\widehat{I}$ and the bias vector $\vec{\bar{b}}(I)$ of the BLP $I$. 

\begin{assumption}\label{gap}
Given a BLP $I = (\vec{A},\vec{b},\vec{c})$ with bias $\vec{\bar{b}}(I)$ and  corresponding relaxation $\widehat{I}$, there exists a $\Delta > 0$ such that 
\begin{align*}
	\min_{\vec{x} \in F(\hat{I})} \norm{ \vec{x} - \vec{\bar{b}}(I) }_1 \leq \Delta \cdot n.
\end{align*}
For a set of BLP $S$, we assume 
\begin{align*}
	\sum_{I \in S} \min_{\vec{x} \in F(\hat{I})} \norm{ \vec{x} - \vec{\bar{b}}(I) }_1 \leq \Delta \cdot \sum_{I \in S} n_i, 
\end{align*}
where $n_{i}$ denotes the number of variables of instance $I_i$.
\end{assumption}

Now the following result states that there exists a GNN architecture and corresponding weight assignments that outputs $\varepsilon$-feasible solutions for the relaxations of MILPs from a finite (training) set, minimizing the MAE to (real-valued) bias predictions.

\begin{proposition}\label{bound}
Let $S = \{ (I_i, \vec{\bar{b}}(I_i)) \}_{i \in [|S|]}$ be a finite training set of (feasible) MILPs with corresponding biases and let $\varepsilon > 0$. Then, under Assumption 1 and 2, there exists a GNN architecture and corresponding weights such that it outputs an $\varepsilon$-feasible solution $\vec{\widehat{x}}_{I_i}$ for the relaxation of each MILP  $I_i$ in $S$ and
\begin{equation*}
    \nicefrac{1}{|S|} \sum_{I_i \in S} \nicefrac{1}{n_i} \norm{ \vec{\widehat{x}}_{I_i} - \vec{\bar{b}}(I_i) }_1 \leq \varepsilon + \Delta,
\end{equation*}
upper-bounding the MAE on the training set $S$. Here, $n_i$ denotes the number of variables of instance $I_i$.  The number of layers of the GNN is polynomially-bounded in $\varepsilon$ and logarithmically-bounded in the number of constraints over all instances. 
\end{proposition}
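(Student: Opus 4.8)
The plan is to realize one full run of the \emph{multiplicative weights update} (MWU) algorithm for LP feasibility inside the message passing of a MIP-GNN, so that the network's read-out is exactly the averaged iterate MWU returns. Recall that, to solve $\vec{A}\vec{x} \geq \vec{b}$, MWU maintains a probability vector $\vec{p}$ over the $m$ constraints and, in each of $T$ rounds, calls the oracle $\cO$ on the single aggregated inequality $\vec{p}\trans\vec{A}\vec{x} \geq \vec{p}\trans\vec{b}$ to obtain $\vec{x}^{(t)} \geq 0$, then reweights $\vec{p}$ multiplicatively using the per-constraint residuals $\vec{A}_j\vec{x}^{(t)} - b_j$. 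The standard analysis of~\cite{Aro+2012} shows that $\nicefrac{1}{T}\sum_t \vec{x}^{(t)}$ is $\varepsilon$-feasible once $T$ is of order $\rho^2 \log m / \varepsilon^2$, which is polynomial in $1/\varepsilon$ and logarithmic in $m$, matching the claimed layer count. Here the uniform width bound $\vec{A}_j\vec{x} - b_j \in [-\rho, \rho]$ is exactly what controls $T$ across all oracle outputs.

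Next I would show that a single MWU round is one interleaved v-to-c / c-to-v pass of the architecture. The weights $\vec{p}$ live on the constraint nodes, and the residual $\vec{A}_j\vec{x}^{(t)} - b_j$ is precisely what the v-to-c pass assembles at constraint node $c_j$ from its adjacent variables, weighted by the edge features $A_{ji}$ and the right-hand side $b_j$. Crucially, the normalized residual $\vec{e} = \mathrm{softmax}(\vec{A}\vec{\bar{x}} - \vec{b})$ built into the MIP-GNN is exactly the exponential reweighting MWU applies to the constraint distribution, so the softmax layer implements the weight update directly. The oracle call is supplied by Assumption~1: a neural unit reproduces $\cO$ on the aggregated inequality assembled by the c-to-v pass, emitting $\vec{x}^{(t)}$. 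Accumulating the iterates and dividing by $T$ is a linear read-out realized by the final MLP, so a stack of $T$ interleaved passes outputs $\nicefrac{1}{T}\sum_t \vec{x}^{(t)}$, which is $\varepsilon$-feasible by the paragraph above.

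To upgrade feasibility to the MAE bound, I would run MWU not on $\vec{A}\vec{x} \geq \vec{b}$ alone but on this system augmented with a linearization of the distance constraint $\norm{\vec{x} - \vec{\bar{b}}(I)}_1 \leq \Delta \cdot n$ (via auxiliary epigraph variables $t_i \geq \pm(x_i - \bar{b}_i)$ and $\sum_i t_i \leq \Delta n$). Assumption~2 guarantees that this augmented system is feasible, so MWU returns a point $\vec{\widehat{x}}_{I}$ that is $\varepsilon$-feasible for every inequality, the distance one included; unwinding the $\varepsilon$-feasibility of the linearized $\ell_1$ constraint yields $\norm{\vec{\widehat{x}}_{I} - \vec{\bar{b}}(I)}_1 \leq (\Delta + \varepsilon)\,n$ after absorbing the $\varepsilon$-slack (rescaling the tolerance handles the constant). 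Dividing by $n$ and averaging over $S$ then gives the stated $\varepsilon + \Delta$ bound, while the original constraints remain $\varepsilon$-feasible.

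The main obstacle is \emph{uniformity}: one permutation-equivariant GNN with \textbf{shared} weights must simulate MWU simultaneously on every instance of the finite set $S$, even though the instances differ in size, width $\rho$, and constraint count $m$. I would resolve this by fixing the depth to the worst-case $T$ over the finite set $S$ (hence bounded, with $\log m$ taken over the union of constraints) and padding shorter runs with idempotent rounds; since the residual aggregation, the softmax reweighting, and the oracle unit are all instance-independent local maps, the same weights act correctly on each bipartite graph $B(I_i)$. The delicate point is that the neural oracle of Assumption~1 and the softmax update must compose into the exact MWU recurrence on every graph at once, which is what makes the shared-weight, local-aggregation structure of the MIP-GNN essential rather than incidental.
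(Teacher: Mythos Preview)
Your approach is essentially the paper's: reduce to an augmented LP-feasibility system (the original constraints together with a linearized $\ell_1$ budget $\norm{\vec{x}-\vec{\bar b}}_1 \le \Delta n$, feasible by Assumption~2), run MWU to obtain an $\varepsilon$-feasible averaged iterate, and observe that each MWU round is a local message-passing computation a GNN layer can realize, with depth $T = O(\rho^2\log m/\varepsilon^2)$. The paper handles the finite set $S$ slightly differently---it stacks all relaxations into one block-diagonal LP and runs MWU once on that, rather than arguing shared-weight uniformity across separate bipartite graphs---but both routes give the same layer count and the same MAE bound.

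One claim, however, is incorrect as stated. You write that the built-in normalized residual $\vec{e}=\mathrm{softmax}(\vec{A}\vec{\bar x}-\vec{b})$ ``is exactly the exponential reweighting MWU applies'' so that ``the softmax layer implements the weight update directly.'' It does not: the MWU distribution over constraints is \emph{cumulative}, $p_j^{(t)} \propto \prod_{s<t}(1-\eta\, e_j^{(s)})$ (equivalently $\propto \exp\bigl(-\eta\sum_{s<t} e_j^{(s)}\bigr)$ in the exponential-weights version), whereas $\mathrm{softmax}(\vec{A}\vec{\bar x}-\vec{b})$ depends only on the \emph{current} residual and forgets all prior rounds. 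Since the proposition only asserts the existence of \emph{some} GNN architecture and weights, the repair is easy---have the constraint embedding carry the running sum of scaled residuals and apply the softmax (or the linear update of Algorithm~1) to that accumulated quantity---but the identification you make is false, and the paper's proof sketch accordingly does not attempt it: it merely asserts that suitable $f_{\text{aggr}}$, $f_{\text{merge}}$ exist to simulate the MWU recurrence exactly.
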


\begin{proof}[Proof sketch]
Without loss of generality, we can encode the $S$ relaxations $\widehat{I}_{i \in [|S|]}$ as one LP with a sparse, block-diagonal constraint matrix. Henceforth, we just focus on a single instance/bias pair $(I,\bar{\vec{b}})$ in $S$ with $I = (\vec{A}, \vec{b}, \vec{c})$. Now consider the following (convex) optimization problem: 
\begin{align}
    \tag{L1}
    \arg\min_{\vec{x}} \norm{\vec{x} - \vec{\bar{b}}}_1\\
    \vec{A}\vec{x} \geq \vec{b}.\nonumber
\end{align}
Since, by assumption, $\norm{\vec{x} - \vec{\bar{b}}}_1 = \sum_{i=1}^n | x_i - \vec{\bar{b}}_i | \leq \Delta \cdot n$, each summand, ``on average'', introduces an error of $\Delta$.
Hence,  we can transform the optimization problem L1, using a standard transformation, into a system of linear inequalities (System L2). Since system L2 is feasible by construction, the MWU algorithm will return an $\varepsilon$-feasible solution  $\vec{\widehat{x}}$ for L2 and the relaxation $\widehat{I}$, see~\cref{arora}. By stitching everything together, we get
\begin{align*}
    \norm{\vec{\widehat{x}} - \vec{\bar{b}}}_1 \leq (\Delta + \varepsilon) \cdot n. 
\end{align*}
Hence, on average over $n$ variables, we introduce an error of $\varepsilon + \Delta$. This finishes the part of the proof upperbounding the MAE. 

We now show how to simulate the MWU via a GNN. First, observe that we can straightforwardly interpret the MWU algorithm as a message-passing algorithm, as outlined in~\cref{alg:as}. Since every step can be expressed as a summation or averaging over neighboring nodes on a suitably defined graph, encoding the system L2 for all $S$ relaxations $\widehat{I}_{i \in [|S|]}$, there exists an implementation of the functions $f^{t,\vec{W_1}}_{\text{aggr}}$ and $f^{t,\vec{W_2}}_{\text{merge}}$ of~\cref{eq:gnngeneral} and corresponding parameter initializations such that we can (exactly) simulate each iteration $t$ of the MWU in $\mathbb{R}^d$ for some suitable chosen $d>0$, depending linearly on $|S|$ and $n$. By the same reasoning, there exists an MLP that maps each such encoded variable assignment for variable $x_i$ onto the solution $\nicefrac{\bar{x}_i}{T}$ returned by~\cref{alg:as}. The bound on the number of required GNN layers is a straightforward application of~\cref{arora}.
\end{proof}

\begin{algorithm}[H]\mbox{\hfill}
	\\\textbf{Input:} Bipartite constraint graph $B(I)$, $\varepsilon > 0$,  stepsize $\eta > 0$, scaling constant $\rho$. \\
	\textbf{Output:} $\varepsilon$-feasible $\vec{\bar{x}}$ or \texttt{non-feasible}.
	\begin{algorithmic}[1]
		\State Initialize weights $w_j \leftarrow 1$ for each constraint node
		\State Initialize probabilities $p_j \leftarrow \nicefrac{1}{m}$ for each constraint node
		\State Set $T$ to according~\cref{mwubound}
		\For{$t \text{ in } [T]$}
		\State Update each variable node $\vec{v}_i$ based on output $\vec{x}_i$ of oracle $\mathcal{O}$ using $\vec{p}$
		\State Compute error signal ${e}_i$ for each constraint node $\vec{c}_j$
		\begin{equation*}
		e_j \leftarrow \nicefrac{1}{\rho}\,\Big(\sum_{i \in N(j)} \label{key} A_{ji}  \vec{x}_i \Big) - \vec{b}_j   
		\end{equation*}
		\State Update $w_i \leftarrow (1 - \eta e_j) w_i$ 
		\State Normalize weights $p_j \leftarrow \nicefrac{w_j}{\boldsymbol{\Gamma}(t)} $  for $i$ in $[m]$, where $\boldsymbol{\Gamma}(t) = \sum_{i \in [m]} \vec{w}_i$
		\State Update solution $\vec{\bar{x}} \leftarrow \vec{\bar{x}} + \vec{x}$
		\EndFor
		\State \Return Average over solutions $ \nicefrac{\vec{\bar{x}}}{T}$
	\end{algorithmic}
	\caption{MWU (Message passing version) for the LP feasibility problem.}
	\label{alg:as}
\end{algorithm}

\begin{theorem}[E.g., \cite{Aro+2012}]\label{arora}
Given a BLP $I = (\vec{A},\vec{b},\vec{c})$, its corresponding relaxation $\widehat{I}$, and $\varepsilon > 0$,~\cref{alg:as} with 
\begin{equation}\label{mwubound}
	T = \left\lceil \frac{4 \rho \ln(m)}{\varepsilon^2} \right\rceil,
\end{equation}
outputs an $\varepsilon$-feasible solution for $\widehat{I}$ or determines that $\widehat{I}$ is not feasible.
\end{theorem}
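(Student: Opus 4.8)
The plan is to recognize \cref{alg:as} as an instance of the multiplicative weights update (MWU), or ``experts'', framework applied to the hard constraints of $\widehat{I}$, and to run the standard potential-function argument of~\cite{Aro+2012}. First I would fix notation for round $t$: the normalized residual $m_j^{(t)} = (\vec{A}_j \vec{x}^{(t)} - b_j)/\rho$ plays the role of the $t$-th gain of expert $j$, and the width assumption $\vec{A}_j \vec{x} - b_j \in [-\rho,\rho]$ guarantees $m_j^{(t)} \in [-1,1]$, exactly the boundedness the analysis needs. The key structural fact to record is the oracle guarantee: since $\mathcal{O}$ returns $\vec{x}^{(t)} \geq 0$ with $\vec{p}^{(t)\mathsf{T}}\vec{A}\vec{x}^{(t)} \geq \vec{p}^{(t)\mathsf{T}}\vec{b}$, we have $\langle \vec{p}^{(t)}, \vec{m}^{(t)}\rangle \geq 0$ at every round. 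Conversely, if $\mathcal{O}$ ever reports that the single aggregated inequality $\vec{p}^{(t)\mathsf{T}}\vec{A}\vec{x} \geq \vec{p}^{(t)\mathsf{T}}\vec{b}$ admits no $\vec{x} \geq 0$, then no $\vec{x}\geq 0$ can satisfy all of $\vec{A}\vec{x}\geq\vec{b}$ (such a point would satisfy every nonnegative combination of the rows), so $\widehat{I}$ is genuinely infeasible and the \texttt{non-feasible} output is sound; this disposes of the infeasible case.

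For the feasible case I would carry out the two-sided bound on the weight potential $\boldsymbol{\Gamma}(t) = \sum_{j} w_j$. On the one hand, the update $w_j \leftarrow (1-\eta e_j)w_j$ gives $\boldsymbol{\Gamma}(t+1) = \boldsymbol{\Gamma}(t)\bigl(1 - \eta\langle\vec{p}^{(t)},\vec{m}^{(t)}\rangle\bigr)$, so by the nonnegativity of the oracle term $\boldsymbol{\Gamma}(T+1) \leq \boldsymbol{\Gamma}(1) = m$. On the other hand, tracking a single surviving weight and taking logarithms, $\ln w_j(T+1) = \sum_{t} \ln\bigl(1-\eta m_j^{(t)}\bigr) \geq -\eta\sum_t m_j^{(t)} - \eta^2 T$, using the elementary inequality $\ln(1-z)\geq -z-z^2$ for $|z|\leq \tfrac12$ (valid since $\eta\leq\tfrac12$ and $|m_j^{(t)}|\leq 1$). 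Combining $w_j(T+1)\leq\boldsymbol{\Gamma}(T+1)\leq m$ with this lower bound and dividing by $\eta T$ yields the per-constraint regret guarantee
\begin{equation*}
\frac{1}{T}\sum_{t=1}^{T} m_j^{(t)} \;\geq\; -\frac{\ln m}{\eta T} - \eta \qquad \text{for every } j \in [m].
\end{equation*}

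It then remains to translate regret into feasibility of the returned point. By linearity, $\tfrac1T\sum_t m_j^{(t)} = \tfrac1\rho\bigl(\vec{A}_j\bar{\vec{x}} - b_j\bigr)$, where $\bar{\vec{x}} = \tfrac1T\sum_t \vec{x}^{(t)}$ is exactly the average $\nicefrac{\vec{\bar{x}}}{T}$ returned on the last line of \cref{alg:as}; moreover $\bar{\vec{x}}\geq 0$ as a convex combination of the oracle's nonnegative outputs, so it is an admissible candidate. The regret bound therefore reads $\vec{A}_j\bar{\vec{x}} - b_j \geq -\rho\bigl(\tfrac{\ln m}{\eta T} + \eta\bigr)$ for all $j$, i.e., $\bar{\vec{x}}$ violates each constraint by at most $\rho(\tfrac{\ln m}{\eta T}+\eta)$. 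Balancing the two terms by choosing $\eta = \Theta(\varepsilon/\rho)$ drives both below $\varepsilon/2$ precisely once $T$ reaches the value prescribed in \cref{mwubound}, making $\bar{\vec{x}}$ an $\varepsilon$-feasible solution for $\widehat{I}$ and finishing the argument.

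I expect the main obstacle to be bookkeeping rather than conceptual: matching the width normalization and the optimal step size $\eta$ to the exact constant appearing in \cref{mwubound}, and checking that the ``easy'' membership constraints (here $\vec{x}\geq 0$) are preserved under averaging so that the returned $\bar{\vec{x}}$ is admissible. Everything else is the textbook MWU potential argument, and the logarithmic-in-$m$ iteration count falls out immediately because $\boldsymbol{\Gamma}(1)=m$ enters the bound only through $\ln m$.
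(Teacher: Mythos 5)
The paper never actually proves \cref{arora}: it is imported verbatim from \cite{Aro+2012} (note the ``E.g.'' in the theorem header) and used as a black box inside the proof of \cref{bound}, so there is no internal proof to diverge from. Your reconstruction is the standard potential argument from that reference, and its individual steps are sound: the soundness of the \texttt{non-feasible} output (a point feasible for $\vec{A}\vec{x}\ge\vec{b}$, $\vec{x}\ge 0$ would satisfy every nonnegative row combination, in particular $\vec{p}^{(t)\mathsf{T}}\vec{A}\vec{x}\ge\vec{p}^{(t)\mathsf{T}}\vec{b}$); the multiplicative potential identity $\boldsymbol{\Gamma}(t+1)=\boldsymbol{\Gamma}(t)\bigl(1-\eta\langle\vec{p}^{(t)},\vec{m}^{(t)}\rangle\bigr)$ with $\boldsymbol{\Gamma}(1)=m$; the single-weight lower bound via $\ln(1-z)\ge -z-z^2$ for $|z|\le\tfrac12$; and the observation that the returned point $\nicefrac{\vec{\bar{x}}}{T}$ inherits nonnegativity under averaging. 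This is exactly the argument the citation points to.

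There is, however, one concrete place where your bookkeeping does not close, and it is the very place you flagged as ``matching the constant.'' Under the paper's \emph{symmetric} width assumption $\vec{A}_j\vec{x}-b_j\in[-\rho,\rho]$, your normalized costs satisfy $m_j^{(t)}\in[-1,1]$, and your regret bound $\tfrac1T\sum_t m_j^{(t)}\ge -\tfrac{\ln m}{\eta T}-\eta$ must be pushed below $-\varepsilon/\rho$ (not $-\varepsilon$) to yield $\vec{A}_j\bigl(\nicefrac{\vec{\bar{x}}}{T}\bigr)-b_j\ge-\varepsilon$. Balancing then forces $\eta=\varepsilon/(2\rho)$ and $T\ge 4\rho^{2}\ln(m)/\varepsilon^{2}$, which is \emph{quadratic} in $\rho$, whereas \eqref{mwubound} is linear in $\rho$. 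So the claim that your choice of $\eta$ recovers \eqref{mwubound} ``precisely'' is not correct in your normalization. The linear-in-$\rho$ iteration count in \cite{Aro+2012} comes from the asymmetric regret theorem (costs in $[-\ell,\rho]$ with $\ell$ constant, giving $O(\ell\rho\ln m/\varepsilon^{2})$ iterations), or equivalently from measuring the violation in the normalized residual $e_j$ rather than in $\vec{A}_j\vec{x}-b_j$. To prove the theorem exactly as stated you should either invoke that one-sided bound together with a one-sided width assumption on the oracle's outputs, or note that with the symmetric assumption actually made in the paper the correct count is $T=\lceil 4\rho^{2}\ln(m)/\varepsilon^{2}\rceil$; this is an imprecision your proof inherits from the statement itself, and it does not affect the qualitative conclusion (polynomial in $\nicefrac{1}{\varepsilon}$, logarithmic in $m$) used in the proof of \cref{bound}.
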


\section{Additional experimental results -- FCMNF}
\label{app:fcmnf}

Here, we report on additional results for the FCMNF dataset. In summary, while the improvements due to guiding the solver with MIP-GNN predictions are more pronounced for GISP, they are also statistically significant for FCMNF.
\begin{itemize}
    \item[--] Table~\ref{tab:fcmnf_wtl_bestval} shows that MIP-GNN node selection leads to better solutions than the default solver on 81/100 test instances of this problem.
    
    \item[--] Similarly, Table~\ref{tab:fcmnf_primal} shows that the primal integral is smaller (better) most of the time (62/100 wins). Table~\ref{tab:fcmnf_primal} shows that the primal integral is also smaller on average and w.r.t.\@ the median.
    
    \item[--] Figures~\ref{fig:fcmnf_main/fcmnf_box_primal} and~\ref{fig:fcmnf_main/fcmnf_box_gap} are analogous to GISP Figures~\ref{fig:gisp_box_primal} and~\ref{fig:gisp_box_gap} from the main text.

\end{itemize}

\begin{table}[!htbp]
\centering
\caption{FCMNF; number of wins, ties, and losses, for the proposed method (MIP-GNN for node selection), with respect to the \textbf{objective value of the best solution found} compared to the default solver setting.}
\label{tab:fcmnf_wtl_bestval}
\begin{tabular}{lrrrr}
\toprule
\textbf{Problem Set} & \textbf{Wins} & \textbf{Ties} & \textbf{Losses} &     \textbf{p-value} \\
\midrule
L\_n200\_p0.02\_c500 &   81 &    0 &     17 & 5.18129e-14 \\
\bottomrule
\end{tabular}
\end{table}

\begin{table}[!htbp]
\caption{FCMNF; number of wins, ties, and losses, for the proposed method (MIP-GNN for node selection), with respect to the \textbf{Primal Integral} compared to the default solver setting.}
\centering
\label{tab:fcmnf_wtl_primalintegral}
\begin{tabular}{lrrrr}
\toprule
\textbf{Problem Set} & \textbf{Wins} & \textbf{Ties} & \textbf{Losses} &     \textbf{p-value} \\
\midrule
L\_n200\_p0.02\_c500 &   62 &    0 &     36 & 0.015951 \\
\bottomrule
\end{tabular}
\end{table}

\begin{table}[!htbp]
\caption{FCMNF; statistics on the \textbf{Primal Integral} for the one problem sets with 100 instances; lower is better.}
\centering
\label{tab:fcmnf_primal}
\begin{tabular}{llrrr}
\toprule
\multirow{3}{*}{\vspace*{4pt}\textbf{Problem Set}}& \multirow{3}{*}{\vspace*{4pt}\textbf{Method}} &\multicolumn{3}{c}{\textbf{Statistics}}\\
				\cmidrule{3-5}
               &             & \textbf{Avg.}          &  \textbf{Std.} & \textbf{Median}  \\
\midrule
L\_n200\_p0.02\_c500 & default & 137.05 & 49.74 & 133.46 \\
                  & MIP-GNN (node selection) & 125.97 & 34.14 & 122.49 \\
\bottomrule
\end{tabular}
\end{table}

\begin{table}[!htbp]
\caption{FCMNF; statistics on the \textbf{Optimality Gap} for the one problem sets with 100 instances; lower is better.}
\centering
\label{tab:fcmnf_gap}
\begin{tabular}{llrrr}
\toprule
\multirow{3}{*}{\vspace*{4pt}\textbf{Problem Set}}& \multirow{3}{*}{\vspace*{4pt}\textbf{Method}} &\multicolumn{3}{c}{\textbf{Statistics}}\\
				\cmidrule{3-5}
               &             & \textbf{Avg.}          &  \textbf{Std.} & \textbf{Median}  \\
\midrule
L\_n200\_p0.02\_c500 & default & 0.33 & 0.03 &   0.33 \\
                  & MIP-GNN (node selection) & 0.35 & 0.03 &   0.34 \\
\bottomrule
\end{tabular}
\end{table}

\begin{figure}[!htbp]
\centering
\begin{subfigure}[b]{0.45\textwidth}
\includegraphics[scale=0.40]{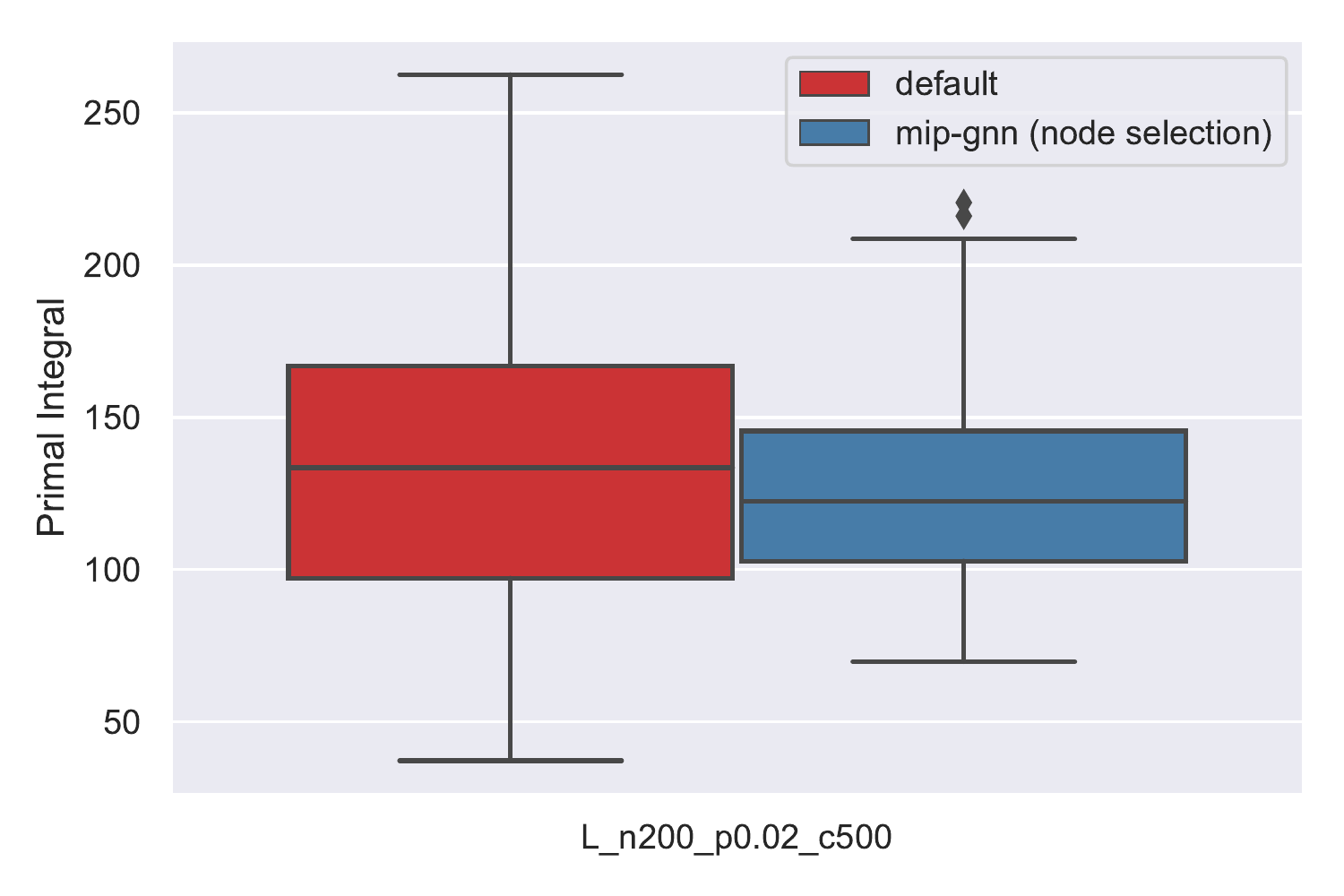}
\caption{Box plots for the distribution of \textbf{Primal Integrals} for the ten problem sets, each with 100 instances; lower is better. \vspace{10pt}}
    \label{fig:fcmnf_main/fcmnf_box_primal}
\end{subfigure}\hspace{10pt}
\begin{subfigure}[b]{0.45\textwidth}
\centering
\includegraphics[scale=0.40]{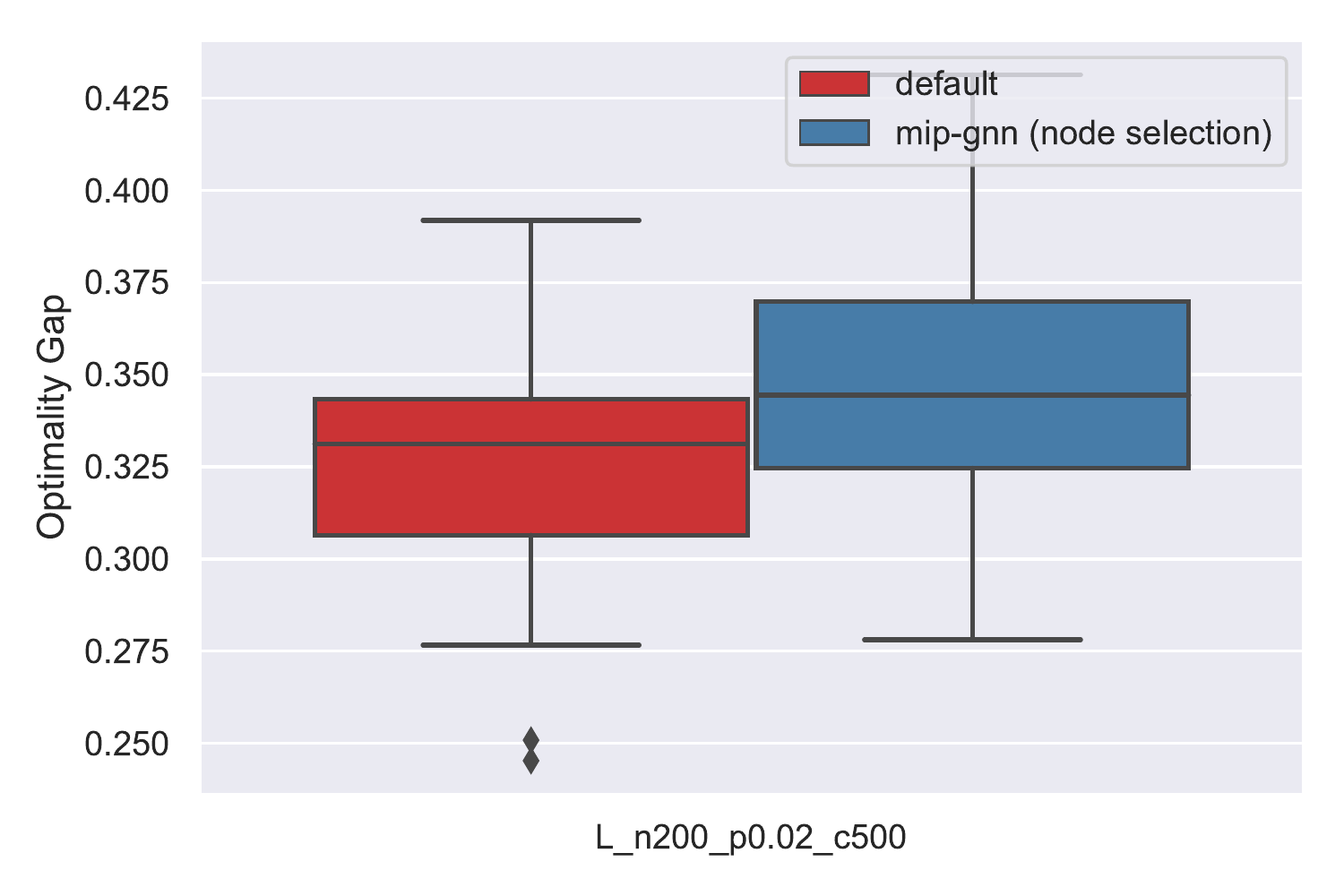}
\caption{Fixed-Charge Multi-Commodity Network Flow (FCMNF); Box plots for the distribution of \textbf{Optimality Gaps} for the ten problem sets, each with 100 instances; lower is better. 
}
\label{fig:fcmnf_main/fcmnf_box_gap}
\end{subfigure}
\vspace{10pt}
\caption{Fixed-Charge Multi-Commodity Network Flow (FCMNF); distribution of \textbf{Primal Integrals} and \textbf{Optimality Gaps}.}
\end{figure}
\newpage

\section{Additional experimental results -- GISP}
\label{app:gisp_allmethods}

Here, we report on additional results for the GISP datasets.

\begin{itemize}
    \item[--] Table~\ref{tab:gisp_wtl_bestval} shows that the proposed method (MIP-GNN for node selection) finds a better final solution than default CPLEX. This complements the results in the main text, which show improvements w.r.t.\@ the primal integral.
    
    \item[--] Tables~\ref{tab:gisp_primal_allmethods} and~\ref{tab:gisp_gap} show statistics for the primal integral and the optimality gap, respectively, for both the default solver and three uses of the MIP-GNN models.    
    \item[--] Figures~\ref{fig:gisp_allmethods/gisp_box_primal} and~\cref{fig:gisp_allmethods/gisp_box_gap} extend \cref{fig:gisp_box_primal} and~\cref{fig:gisp_box_gap} from the main text to include the two other uses of MIP-GNN, ``variable selection" and ``warmstart". While ``variable selection" sometimes outperforms ``default", ``node selection" is typically the winner.
    
    \item[--] Figures~\ref{fig:gisp_allmethods/gisp_box_numsols} and~\ref{fig:gisp_allmethods/gisp_box_numsols_lp} shed more light into how MIP-GNN uses affect the solution finding process in the MIP solver. ``node selection" finds more incumbent solutions (\cref{fig:gisp_allmethods/gisp_box_numsols}) than ``default", but also many more of those incumbents are integer solutions to node LP relaxations (\cref{fig:gisp_allmethods/gisp_box_numsols_lp}). This indicates that this guided node selection strategy is moving into more promising reasons of the search tree, which makes incumbents (i.e., improved integer-feasible solutions) more likely to be found by simply solving the node LP relaxations. In contrast, ``default" has to rely on solver heuristics to find incumbents, which may incur additional running time costs.
\end{itemize}

\begin{table}[!htbp]
\centering
\caption{Generalized Independent Set Problem; number of wins, ties, and losses, for \textbf{MIP-GNN (node selection)}, with respect to the \textbf{objective value of the best solution found} compared to the default solver setting. Column ``p-value" refers to the result of the Wilcoxon signed-rank test; smaller values indicate that our method is better than default.}
\begin{tabular}{lrrrr}
\toprule
\textbf{Problem Set} & \textbf{Wins} & \textbf{Ties} & \textbf{Losses} &     \textbf{p-value} \\
\midrule
C125.9        &    2 &   98 &      0 &   0.08 \\
C250.9      &   89 &    1 &     10 & 2.3e-16 \\
brock200\_2     &   75 &    1 &     24 & 3.4e-08 \\
brock200\_4     &   87 &    3 &     10 & 2.2e-15 \\
gen200\_p0.9\_44 &   97 &    0 &      3 & 3.5e-18 \\
gen200\_p0.9\_55 &   99 &    0 &      1 & 2.7e-18 \\
hamming8-4     &   98 &    0 &      2 & 2.3e-18 \\
keller4      &   84 &    1 &     15 & 8.7e-13 \\
p\_hat300-1     &   86 &    0 &     14 & 2.7e-14 \\
p\_hat300-2    &   87 &    0 &     13 & 1e-15 \\
\bottomrule
\end{tabular}
\label{tab:gisp_wtl_bestval}
\end{table}

\begin{table}
\centering
\caption{Generalized Independent Set Problem; number of wins, ties, and losses, for \textbf{MIP-GNN (warmstart)}, with respect to the \textbf{objective value of the best solution found} compared to the default solver setting. Column ``p-value" refers to the result of the Wilcoxon signed-rank test; smaller values indicate that our method is better than default.}
\begin{tabular}{lrrrr}
\toprule
\textbf{Problem Set} & \textbf{Wins} & \textbf{Ties} & \textbf{Losses} &     \textbf{p-value} \\
\midrule
C125.9.clq         &    2 &   98 &      0 &   0.09 \\
C250.9.clq         &   52 &   25 &     23 & 2e-05 \\
brock200\_2.clq     &   42 &    6 &     52 &    0.96 \\
brock200\_4.clq     &   14 &   77 &      9 &    0.17 \\
gen200\_p0.9\_44.clq &   43 &   22 &     35 &     0.18 \\
gen200\_p0.9\_55.clq &   67 &   14 &     19 & 1.8e-07 \\
hamming8-4.clq     &   62 &   14 &     24 & 1.3e-07 \\
keller4.clq        &   36 &   47 &     17 &  5e-03 \\
p\_hat300-1.clq     &   89 &    0 &     11 & 9.8e-15 \\
p\_hat300-2.clq     &   87 &    0 &     13 & 1.2e-14 \\
\bottomrule
\end{tabular}
\label{tab:gisp_wtl_bestval_warmstart}
\end{table}

\begin{table}
\centering
\caption{Generalized Independent Set Problem; number of wins, ties, and losses, for \textbf{MIP-GNN (warmstart)}, with respect to the \textbf{optimality gap} compared to the default solver setting. Column ``p-value" refers to the result of the Wilcoxon signed-rank test; smaller values indicate that our method is better than default.}
\begin{tabular}{lrrrr}
\toprule
\textbf{Problem Set} & \textbf{Wins} & \textbf{Ties} & \textbf{Losses} &     \textbf{p-value} \\
\midrule
C125.9.clq         &    4 &   96 &      0 &   0.03 \\
C250.9.clq         &   59 &   17 &     24 & 5.4e-06 \\
brock200\_2.clq     &   25 &    0 &     75 &           1 \\
brock200\_4.clq     &   32 &   37 &     31 &    0.2 \\
gen200\_p0.9\_44.clq &   47 &    4 &     49 &    0.23 \\
gen200\_p0.9\_55.clq &   80 &    1 &     19 & 3.2e-10 \\
hamming8-4.clq     &   75 &    6 &     19 & 1.6e-10 \\
keller4.clq        &   65 &    9 &     26 & 7.6e-06 \\
p\_hat300-1.clq     &   96 &    0 &      4 & 4.1e-17 \\
p\_hat300-2.clq     &   67 &    0 &     33 & 3e-06 \\
\bottomrule
\end{tabular}
\label{tab:gisp_wtl_gap_warmstart}
\end{table}

\begin{figure}[!htbp]
    \centering
    \includegraphics[scale=0.35]{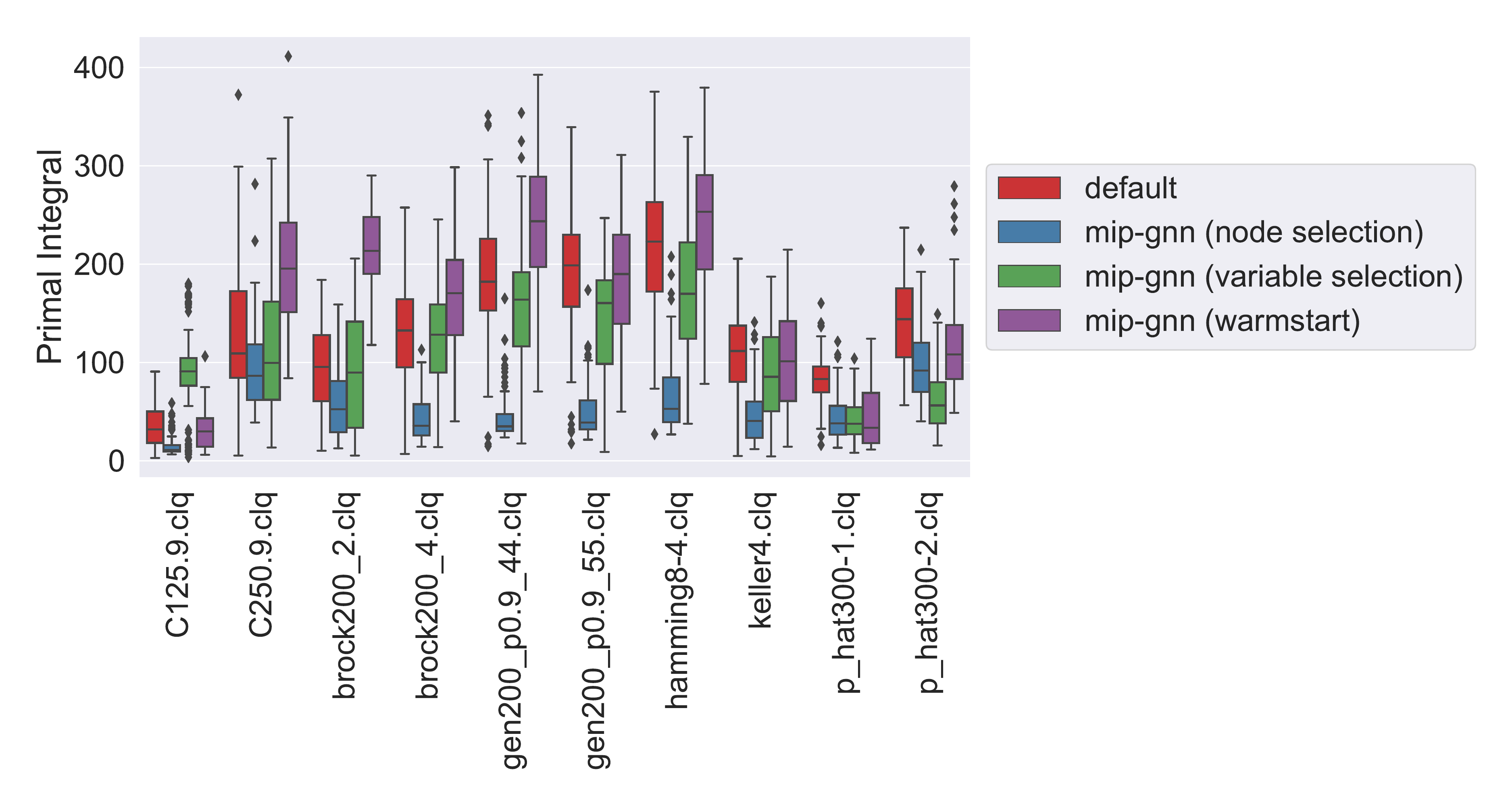}
    \caption{Generalized Independent Set Problem; Box plots for the distribution of \textbf{Primal Integrals} for the ten problem sets, each with 100 instances; lower is better. }
    \label{fig:gisp_allmethods/gisp_box_primal}
\end{figure}

\begin{figure}[!htbp]
    \centering
    \includegraphics[scale=0.35]{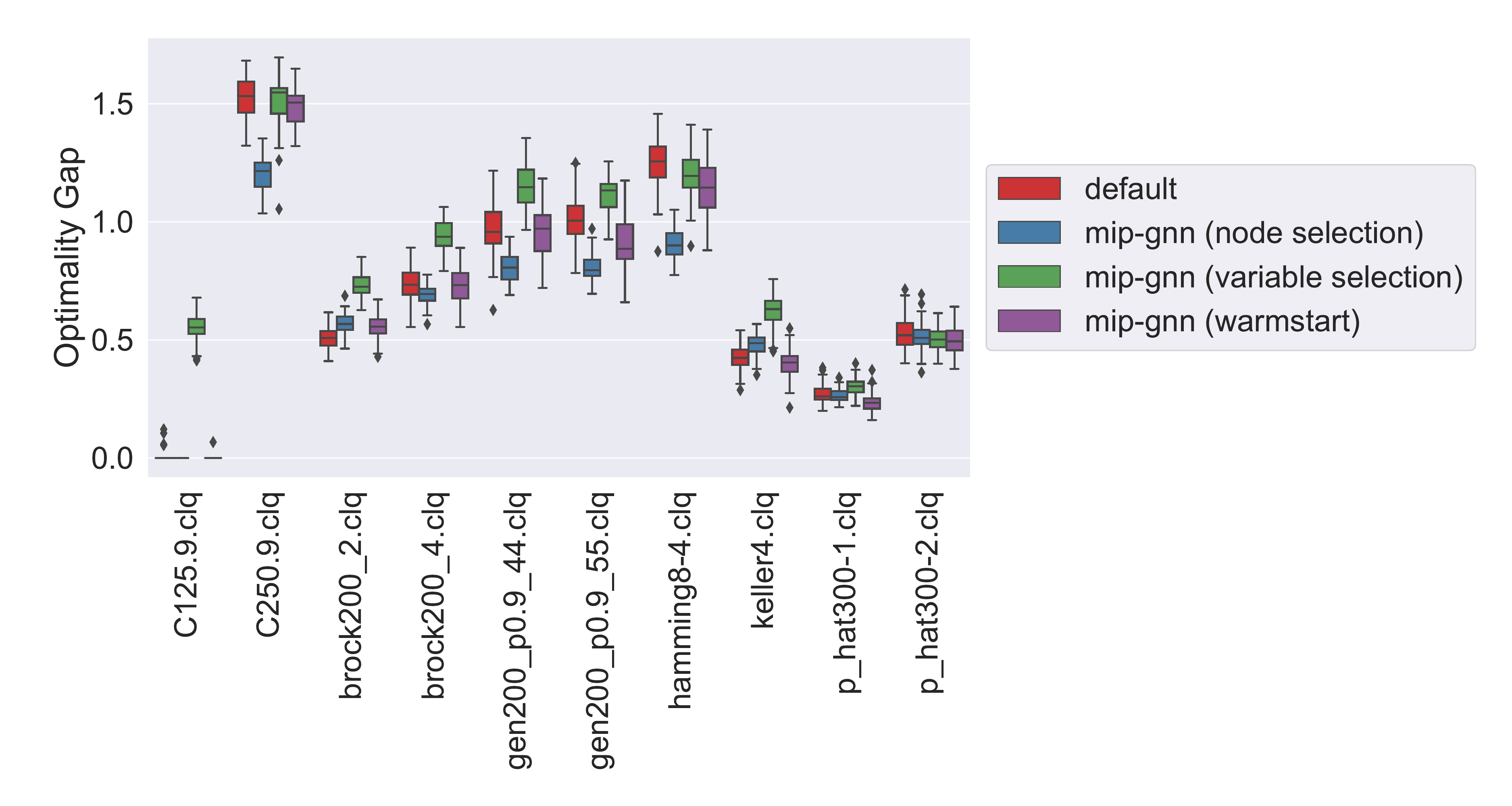}
    \caption{Generalized Independent Set Problem; box plots for the distribution of \textbf{Optimality Gaps} for the ten problem sets, each with 100 instances; lower is better. }
    \label{fig:gisp_allmethods/gisp_box_gap}
\end{figure}

\begin{figure}[!htbp]
    \centering
    \includegraphics[scale=0.35]{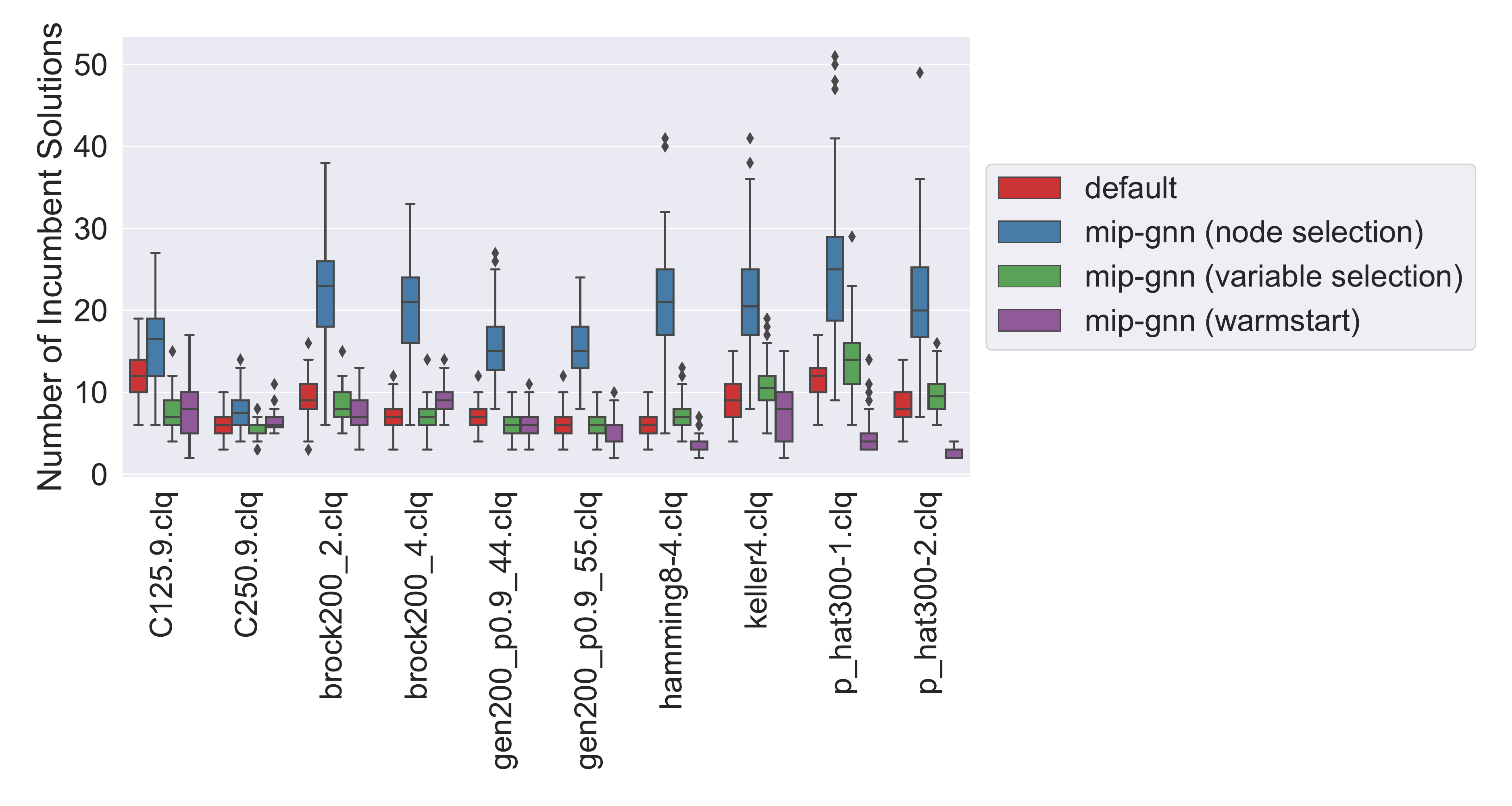}
    \caption{Generalized Independent Set Problem; box plots for the distribution of the \textbf{number of solutions found} during branch and bound for the ten problem sets, each with 100 instances.}
    \label{fig:gisp_allmethods/gisp_box_numsols}
\end{figure}

\begin{figure}[!htbp]
    \centering
    \includegraphics[scale=0.35]{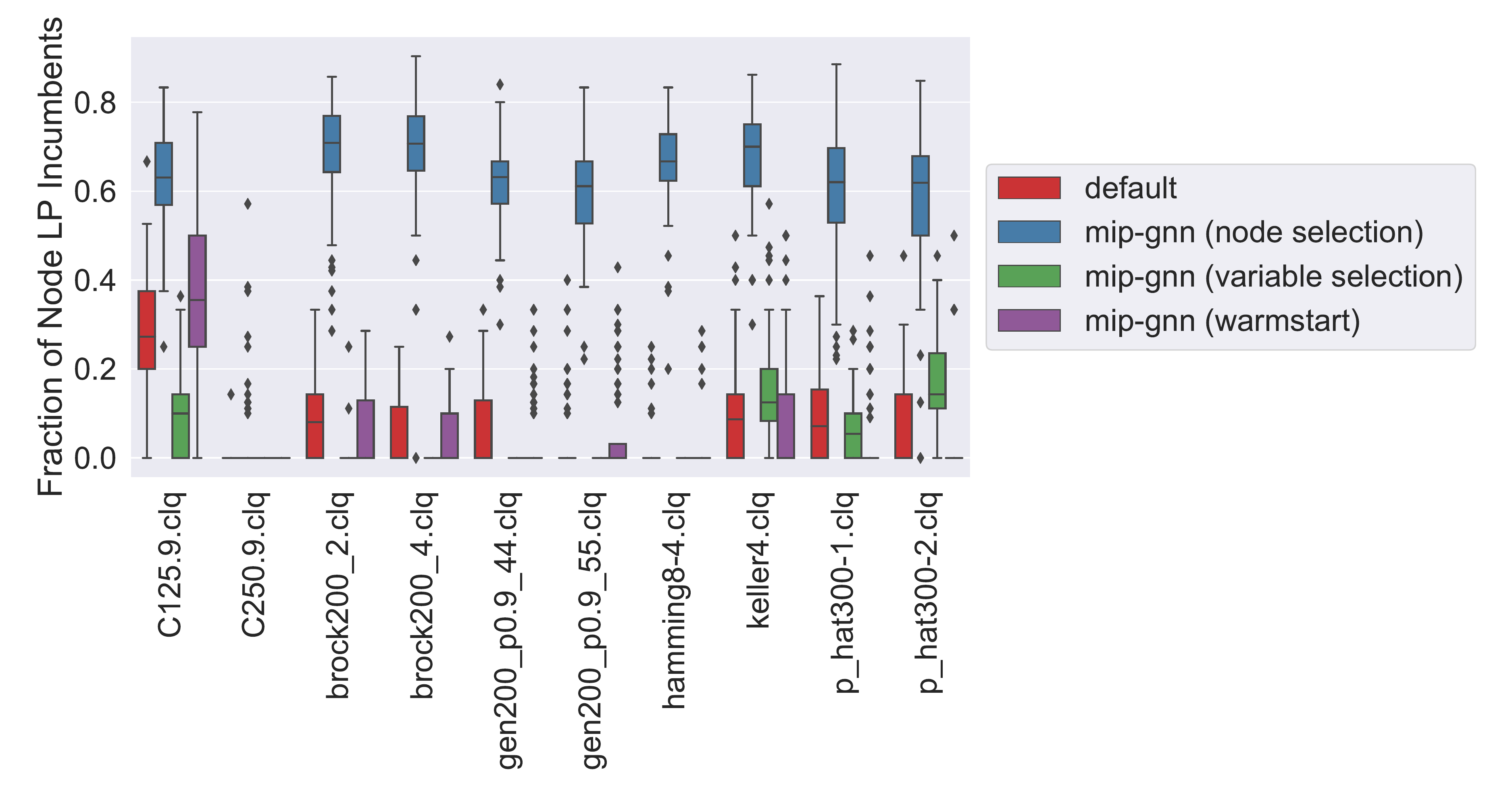}
    \caption{Generalized Independent Set Problem; box plots for the distribution of the \textbf{number of solutions found during branch and bound by solving the LP relaxation at a node of the search tree}; each boxplot corresponds to on of the ten problem sets, each with 100 instances.}
    \label{fig:gisp_allmethods/gisp_box_numsols_lp}
\end{figure}

\begin{table}[!htbp]
\centering
\caption{Generalized Independent Set Problem; statistics on the \textbf{Primal Integral} for the ten problem sets, each with 100 instances; lower is better.}
\begin{tabular}{llrrr}
\toprule
\multirow{3}{*}{\vspace*{4pt}\textbf{Problem Set}}& \multirow{3}{*}{\vspace*{4pt}\textbf{Method}} &\multicolumn{3}{c}{\textbf{Statistics}}\\
				\cmidrule{3-5}
               &             & \textbf{Avg.}          &  \textbf{Std.} & \textbf{Median}  \\
\midrule
C125.9.clq & default &  34.19 & 20.78 &  31.85 \\
               & MIP-GNN (node selection) &  14.90 &  9.76 &  11.13 \\
               & MIP-GNN (variable selection) &  89.55 & 50.28 &  91.07 \\
               & MIP-GNN (warmstart) &  30.73 & 18.87 &  29.85 \\\midrule
C250.9.clq & default & 129.32 & 72.61 & 109.21 \\
               & MIP-GNN (node selection) &  96.31 & 42.92 &  86.54 \\
               & MIP-GNN (variable selection) & 116.01 & 67.90 &  99.63 \\
               & MIP-GNN (warmstart) & 200.62 & 65.49 & 195.52 \\\midrule
brock200\_2.clq & default &  96.95 & 43.53 &  95.58 \\
               & MIP-GNN (node selection) &  59.22 & 34.82 &  52.44 \\
               & MIP-GNN (variable selection) &  89.95 & 55.09 &  89.55 \\
               & MIP-GNN (warmstart) & 214.25 & 40.62 & 213.43 \\\midrule
brock200\_4.clq & default & 130.17 & 52.64 & 132.57 \\
               & MIP-GNN (node selection) &  44.49 & 24.60 &  35.67 \\
               & MIP-GNN (variable selection) & 128.70 & 49.62 & 128.40 \\
               & MIP-GNN (warmstart) & 166.68 & 52.80 & 170.40 \\\midrule
gen200\_p0.9\_44.clq & default & 181.35 & 65.72 & 182.17 \\
               & MIP-GNN (node selection) &  43.75 & 23.18 &  34.98 \\
               & MIP-GNN (variable selection) & 165.85 & 63.93 & 163.84 \\
               & MIP-GNN (warmstart) & 239.93 & 68.44 & 243.68 \\\midrule
gen200\_p0.9\_55.clq & default & 186.81 & 63.17 & 198.87 \\
               & MIP-GNN (node selection) &  50.97 & 28.25 &  38.78 \\
               & MIP-GNN (variable selection) & 148.54 & 56.96 & 160.39 \\
               & MIP-GNN (warmstart) & 184.92 & 59.18 & 189.92 \\\midrule
hamming8-4.clq & default & 220.08 & 66.47 & 222.97 \\
               & MIP-GNN (node selection) &  65.05 & 36.29 &  52.95 \\
               & MIP-GNN (variable selection) & 173.04 & 66.91 & 169.89 \\
               & MIP-GNN (warmstart) & 248.14 & 63.11 & 253.16 \\\midrule
keller4.clq & default & 107.92 & 43.92 & 111.63 \\
               & MIP-GNN (node selection) &  46.99 & 29.44 &  40.66 \\
               & MIP-GNN (variable selection) &  90.94 & 46.81 &  85.39 \\
               & MIP-GNN (warmstart) &  99.43 & 50.20 & 101.27 \\\midrule
p\_hat300-1.clq & default &  83.62 & 24.00 &  83.06 \\
               & MIP-GNN (node selection) &  43.05 & 22.48 &  37.97 \\
               & MIP-GNN (variable selection) &  41.86 & 21.70 &  37.60 \\
               & MIP-GNN (warmstart) &  44.95 & 31.08 &  33.41 \\\midrule
p\_hat300-2.clq & default & 139.69 & 43.99 & 143.98 \\
               & MIP-GNN (node selection) &  96.26 & 36.65 &  91.61 \\
               & MIP-GNN (variable selection) &  62.58 & 34.09 &  56.30 \\
               & MIP-GNN (warmstart) & 117.24 & 44.95 & 108.24 \\
\bottomrule
\end{tabular}
\label{tab:gisp_primal_allmethods}
\end{table}

\begin{table}
\centering
\caption{Generalized Independent Set Problem; statistics on the \textbf{Optimality Gap} for the ten problem sets, each with 100 instances; lower is better. The gap is calculated at solver termination with a time limit of 30 minutes per instance. A gap of zero indicates that an instance has been solved to proven optimality. For GISP, this only happens for (most methods and most instances of) C125.9.clq, the problem set with the smallest number of variables and constraints. For other problem sets, most instances time out with relatively large gaps. A gap of $0.4$, for example, can be interpreted as: the best solution found at termination is guaranteed to be with in $40\%$ of optimal. The optimality gap is calculated by CPLEX as  $\nicefrac{|\text{bestbound}-\text{bestinteger}|}{(10^{-9}+|\text{bestinteger}|)}$.}
\label{tab:gisp_gap}
\begin{tabular}{llrrr}
\toprule
\multirow{3}{*}{\vspace*{4pt}\textbf{Problem Set}}& \multirow{3}{*}{\vspace*{4pt}\textbf{Method}} &\multicolumn{3}{c}{\textbf{Statistics}}\\
				\cmidrule{3-5}
               &             & \textbf{Avg.}          &  \textbf{Std.} & \textbf{Median}  \\
\midrule
C125.9.clq & default & 0.00 & 0.02 &   0.00 \\
               & MIP-GNN (node selection) & 0.00 & 0.00 &   0.00 \\
               & MIP-GNN (variable selection) & 0.55 & 0.06 &   0.55 \\
               & MIP-GNN (warmstart) & 0.00 & 0.01 &   0.00 \\\midrule
C250.9.clq & default & 1.53 & 0.09 &   1.53 \\
               & MIP-GNN (node selection) & 1.21 & 0.07 &   1.21 \\
               & MIP-GNN (variable selection) & 1.52 & 0.10 &   1.55 \\
               & MIP-GNN (warmstart) & 1.48 & 0.08 &   1.50 \\\midrule
brock200\_2.clq & default & 0.51 & 0.05 &   0.51 \\
               & MIP-GNN (node selection) & 0.57 & 0.04 &   0.57 \\
               & MIP-GNN (variable selection) & 0.73 & 0.05 &   0.73 \\
               & MIP-GNN (warmstart) & 0.56 & 0.05 &   0.56 \\\midrule
brock200\_4.clq & default & 0.73 & 0.07 &   0.73 \\
               & MIP-GNN (node selection) & 0.69 & 0.04 &   0.69 \\
               & MIP-GNN (variable selection) & 0.94 & 0.06 &   0.94 \\
               & MIP-GNN (warmstart) & 0.73 & 0.07 &   0.73 \\\midrule
gen200\_p0.9\_44.clq & default & 0.97 & 0.10 &   0.96 \\
               & MIP-GNN (node selection) & 0.81 & 0.06 &   0.81 \\
               & MIP-GNN (variable selection) & 1.15 & 0.08 &   1.15 \\
               & MIP-GNN (warmstart) & 0.96 & 0.10 &   0.97 \\\midrule
gen200\_p0.9\_55.clq & default & 1.01 & 0.10 &   1.01 \\
               & MIP-GNN (node selection) & 0.81 & 0.05 &   0.80 \\
               & MIP-GNN (variable selection) & 1.12 & 0.07 &   1.13 \\
               & MIP-GNN (warmstart) & 0.91 & 0.10 &   0.89 \\\midrule
hamming8-4.clq & default & 1.25 & 0.10 &   1.26 \\
               & MIP-GNN (node selection) & 0.91 & 0.06 &   0.90 \\
               & MIP-GNN (variable selection) & 1.20 & 0.10 &   1.19 \\
               & MIP-GNN (warmstart) & 1.15 & 0.11 &   1.14 \\\midrule
keller4.clq & default & 0.42 & 0.05 &   0.42 \\
               & MIP-GNN (node selection) & 0.48 & 0.04 &   0.49 \\
               & MIP-GNN (variable selection) & 0.62 & 0.07 &   0.63 \\
               & MIP-GNN (warmstart) & 0.40 & 0.06 &   0.40 \\\midrule
p\_hat300-1.clq & default & 0.27 & 0.04 &   0.26 \\
               & MIP-GNN (node selection) & 0.26 & 0.03 &   0.26 \\
               & MIP-GNN (variable selection) & 0.30 & 0.03 &   0.30 \\
               & MIP-GNN (warmstart) & 0.23 & 0.04 &   0.23 \\\midrule
p\_hat300-2.clq & default & 0.53 & 0.06 &   0.52 \\
               & MIP-GNN (node selection) & 0.51 & 0.05 &   0.51 \\
               & MIP-GNN (variable selection) & 0.50 & 0.05 &   0.50 \\
               & MIP-GNN (warmstart) & 0.50 & 0.06 &   0.49 \\
\bottomrule
\end{tabular}
\end{table}

\begin{table}[!htbp]
\centering
\caption{Training dataset statistics.}
\label{tab:ds_stats}
\resizebox{1.0\textwidth}{!}{
\begin{tabular}{lrrrr}
\toprule
\textbf{Problem Set}  & \textbf{Num. graphs} & \textbf{Avg. num. variables} & \textbf{Avg. num. constraints} & \textbf{Avg. non-zeros in constraints}  \\
\midrule
p\_hat300-2.clq & 1\,000  &  16\,748.8 & 21\,928.0 & 120\,609.6       \\
gisp\_C250.9.clq & 1\,000  & 21\,239.5 & 27\,984.0 & 153\,915.0 \\
keller4.clq &  1\,000  &   7\,247.6 & 9\,435.0 & 51\,893.3 \\
hamming8-4.clq & 1\,000  &  15\,906.6 & 20\,864.0 & 114\,757.1 \\
gen200\_p0.9\_55.clq & 1\,000  & 13\,634.5 & 17\,910.0 & 98\,508.9 \\
gen200\_p0.9\_44.clq & 1\,000  & 13\,643.8 & 17\,910.0 & 98\,527.5    \\
C125.9.clq & 1\,000  &  5\,347.9 & 6\,963.0 & 38\,297.9  \\
p\_hat300-1.clq & 1\,000  &  8\,501.1 & 10\,933.0 & 60\,134.1  \\
brock200\_4.clq & 1\,000  & 10\,018.5 & 13\,089.0 & 71\,993.1 \\
brock200\_2.clq & 1\,000  &  7\,607.9 & 9\,876.0 & 54\,319.8
 \\
L\_n200\_p0.02\_c500 & 1\,000  & 80\,497.0 & 20\,797.0 & 479\,794.0 \\
\bottomrule
\end{tabular}}
\end{table}

\clearpage
\section{Additional Experimental Results -- GISP on Erdos-Renyi Graphs}

We also performed experiments on GISP instances based on synthetic Erdos-Renyi graphs. Contrary to other datasets, each generated instance is based on a \textit{different} randomly generated Erdos-Renyi graph on 200 nodes, with the number of edges ranging between 1\,867 and 2\,136; the Erdos-Renyi graph generator has an edge probability of 0.1, and so the expected number of edges is roughly 
2\,000. As in the other GISP graphs used in the paper, the set of removable edges is also random with the same probability of inclusion in the set, $\alpha=0.75$.

Of note is the fact that these instances are generally easier to solve to optimality than other datasets considered in this work. As such, the solver time limit is reduced to 10 minutes. Also, the room for improvement in metrics of interest such as the Primal Integral is smaller. Coupled with the fact that each GISP instance has a different underlying random graph, makes improving over default CPLEX a challenging task for our method.

On 956 out of 1\,000 unseen test instances from this alternative distribution, MIP-GNN (node selection) produces a smaller primal integral than default CPLEX; the average primal integral is halved using MIP-GNN (node selection).

\begin{table}[h]
\centering
\label{tab:_wtl_primal_integral_gisp_er}
\caption{GISP on Erdos-Renyi graphs; number of wins, ties, and losses, for MIP-GNN (node selection), with respect to the \textbf{Primal Integral} compared to the default solver setting.}
\begin{tabular}{lrrrr}
\toprule
\textbf{Problem Set} & \textbf{Wins} & \textbf{Ties} & \textbf{Losses} &     \textbf{p-value} \\
\midrule
er\_200\_SET2\_1k &  956 &    0 &     44 & 3e-157 \\
\bottomrule
\end{tabular}
\end{table}

\begin{figure}[!htbp]
    \centering
    \includegraphics[scale=0.8]{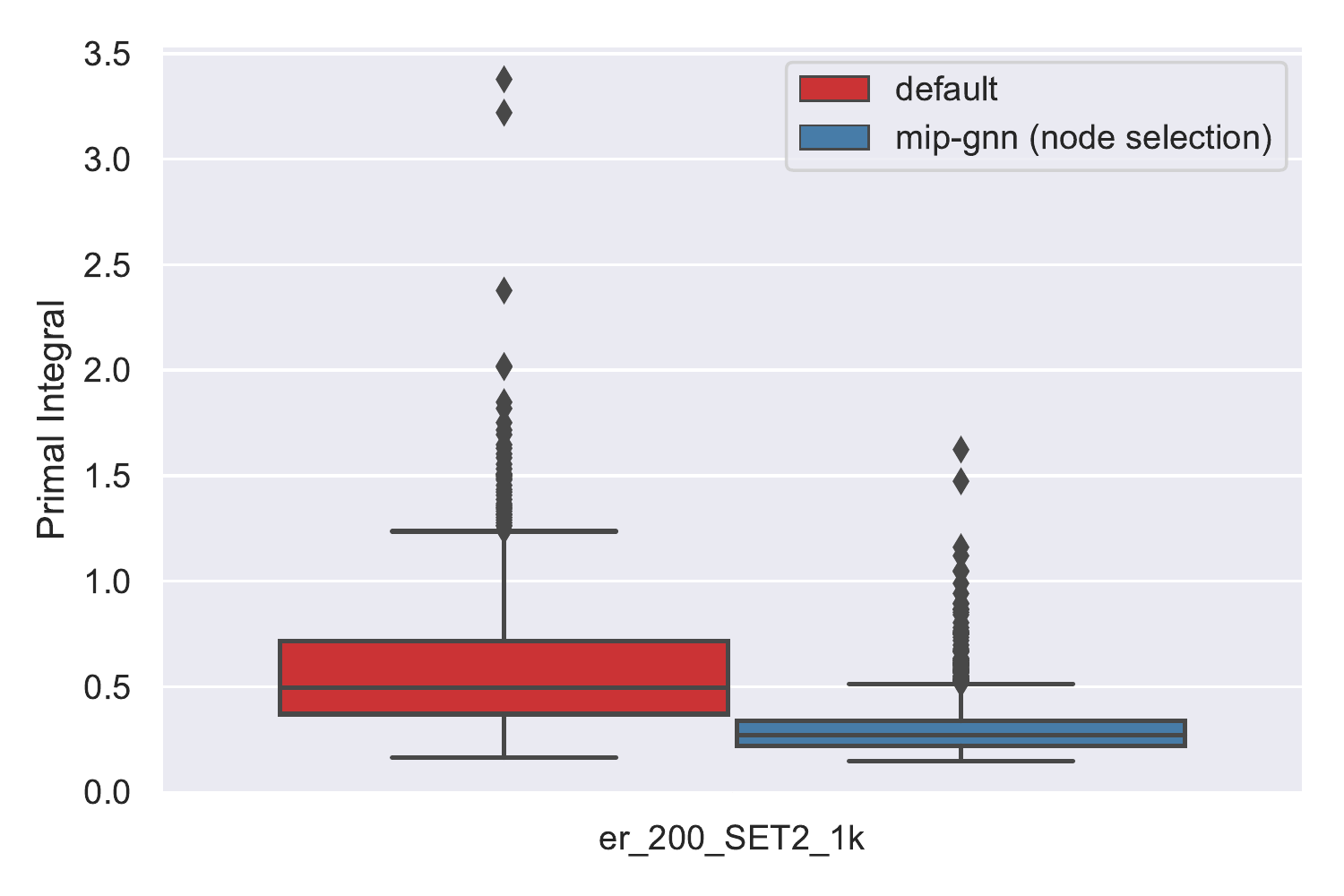}
    \caption{Box plots for the distribution of \textbf{Primal Integrals} for the Erdos-Renyi test dataset with 1000 instances; lower is better. }
    \label{fig:gisp_allmethods/gisp_box_primal}
\end{figure}

\end{document}